\documentclass{article} 
\usepackage{iclr2027_conference,times}
\usepackage[nopatch=eqnum]{microtype}

\usepackage{amsmath,amsfonts,bm}

\def\eqref#1{equation~\ref{#1}}

\def\1{\bm{1}}

\DeclareMathAlphabet{\mathsfit}{\encodingdefault}{\sfdefault}{m}{sl}
\SetMathAlphabet{\mathsfit}{bold}{\encodingdefault}{\sfdefault}{bx}{n}

\usepackage{url}
\usepackage{amsmath,amssymb,amsthm}
\newtheorem{proposition}{Proposition}
\newtheorem{theorem}{Theorem}
\newtheorem{corollary}{Corollary}
\newtheorem{remark}{Remark}
\usepackage{booktabs}
\usepackage{array}
\newcolumntype{L}[1]{>{\raggedright\arraybackslash}p{#1}}
\usepackage{algorithm}
\usepackage[noend]{algpseudocode}
\usepackage{graphicx}
\usepackage{placeins}
\usepackage{capt-of}
\usepackage{hyperref}
\iclrfinalcopy
\hypersetup{
  colorlinks=true,
  linkcolor=blue!50!black,
  citecolor=blue!50!black,
  urlcolor=blue!50!black
}

\title{LLN: Learnable Lens Networks for Parameter-Efficient Long-Horizon Dynamical Prediction}

\author{
Binbin Yong\textsuperscript{1,},
Zhao Su\textsuperscript{1*},
Lan Guo\textsuperscript{1},
Haoran Li\textsuperscript{2}
Jun Shen\textsuperscript{3},
Qi Zhu\textsuperscript{4},
Qingguo Zhou\textsuperscript{1,*}\\[0.5em]
\textsuperscript{1}Lanzhou University,
\textsuperscript{2}Monash University,
\textsuperscript{3}University of Wollongong, \\
\textsuperscript{4}Nanjing University of Aeronautics and Astronautics,
\textsuperscript{*}Corresponding authors
}
\begin{document}

\maketitle
\pagestyle{plain}
\thispagestyle{plain}

\begin{abstract}
Explicit residual connections of the form (x+f(x)), often combined with normalization layers, have become a standard strategy for training very deep neural networks. However, residual addition primarily provides an algebraic shortcut for gradient propagation, while leaving the evolution of feature geometry across layers largely unconstrained. We introduce Learnable Lens Networks (LLN), a physics-inspired architecture that replaces direct feature-space residual accumulation with learnable optical transport in an augmented position-angle phase space. Each layer alternates between free propagation, which provides an implicit transport path, and a learnable lens field that performs nonlinear trajectory transformation and focusing. Theoretically, we establish that LLN transport is globally invertible and volume-preserving for any differentiable lens field, with the implemented coordinate-wise Gaussian transport further satisfying symplecticity. Importantly, these structural constraints do not limit expressivity: with unrestricted embeddings and readouts, LLN retain universal approximation of continuous end-to-end maps. Experiments across diverse dynamical systems demonstrate that LLN improves long-horizon prediction while using substantially fewer parameters than same-depth comparators. Further analysis reveals stable depth-wise gradient transport and interpretable learned dynamics under the coupled propagation and refraction design.

\end{abstract}

\section{Introduction}

Depth enables neural networks to progressively construct abstract representations, but it also makes optimization increasingly sensitive to repeated compositions of layer transformations. Without appropriate architectural constraints, gradients may become vanishing, exploding, or poorly conditioned as signals propagate through many layers. Residual networks address this challenge by introducing an explicit identity shortcut:
\begin{equation}
x_{l+1}=x_l+f_l(x_l),
\label{eq:residual}
\end{equation}
which provides a direct pathway for information and gradient propagation \cite{he2016resnet}. In modern deep learning systems, however, residual connections are typically combined with additional stabilization mechanisms, including normalization, careful initialization, and residual scaling strategies \cite{ioffe2015batchnorm,ba2016layernorm,zhang2019fixup,bachlechner2021rezero}.

This observation highlights an important tension in deep residual learning. The identity branch improves backward signal transport, yet the additive feature-space formulation does not explicitly regulate how representations evolve across depth. When additional stabilization mechanisms are unavailable, repeated additive transformations may introduce uncontrolled changes in feature scale and geometry, even when gradients remain measurable. LLN follows the broader goal of stabilizing deep evolution, but takes a different route: rather than constraining an algebraic update externally, it changes the state space in which the transformation evolves.

Inspired by geometric optics, Learnable Lens Networks (LLN) represent hidden states in an augmented position-angle phase space. Each layer consists of free propagation followed by learnable refraction:
\begin{equation}
\begin{aligned}
y_{l+1} = y_l+L\theta_l,\
\theta_{l+1} = \theta_l+\phi_l(y_{l+1}),
\end{aligned}
\label{eq:lln_transport_intro}
\end{equation}
where \(y_l\) denotes the feature position, \(\theta_l\) is the angle state, \(L\) is the propagation distance, and \(\phi_l\) is a learnable lens field. Free propagation updates the position according to the current angle state, while the lens field modifies the angle according to the propagated position. This design replaces direct feature-space residual accumulation with structured phase-space transport. The resulting dynamics preserve an implicit transport path while allowing nonlinear transformations through learned refraction. Unlike conventional residual updates that operate within a single feature space, LLN separates transported states into complementary variables and enables geometric control over their evolution.

Such structured transport is particularly relevant when learned transformations are repeatedly composed. In long-horizon dynamical prediction, a one-step transition model is recursively applied to generate future states, causing small inaccuracies in representation evolution and transition dynamics to accumulate over time. Related representation-learning approaches therefore impose structure on latent evolution to support multistep prediction \cite{lusch2018deep}. This setting provides a natural testbed for evaluating whether a deep architecture can maintain stable transformations beyond single-step prediction.

The contributions of this work are summarized as follows:
\begin{itemize}
\item We introduce Learnable Lens Networks (LLN), a phase-space architecture for deep multilayer perceptrons (MLPs) that replaces direct feature-space residual accumulation with a learnable optical transport process over coupled position-angle states.

\item LLN admits an exact theoretical characterization: each internal transport layer is globally invertible and volume preserving for any differentiable lens field; the implemented Gaussian transport is symplectic, while a shallow LLN remains a universal approximator through its unrestricted embedding and readout.

\item LLN is evaluated through three empirical studies: fixed-depth long-horizon rollout performance, matched gradient-transport diagnostics, and a low-dimensional optical trajectory case study, with controlled component ablations supporting the transport mechanism. Together, these results examine predictive accuracy, parameter-count efficiency, depth-wise signal propagation, and the interpretability of learned phase-space transport.
\end{itemize}

\section{Related Work}

\paragraph{Deep-network stabilization.}
Residual connections made very deep networks practical by adding identity pathways for information and gradients \cite{he2016resnet}. Their modern use still depends on stabilization choices: BatchNorm, LayerNorm, pre-normalization, Admin, T-Fixup, ReZero, LayerScale, DeepNet, and depthwise transfer regulate activation scale, residual-branch magnitude, initialization, or depth-dependent tuning \cite{ioffe2015batchnorm,ba2016layernorm,xiong2020layernorm,liu2020admin,huang2020tfixup,bachlechner2021rezero,touvron2021going,wang2024deepnet,bordelon2024depthwise}. Normalization-free or normalization-alternative designs make the same point through scaled residual blocks, dynamic tanh transformations, and hyperspherical representations \cite{zhang2019fixup,brock2021normalizerfree,zhu2025transformers,loshchilov2025ngpt}. LLN pursues stable depth by replacing explicit feature-space skip addition with structured position-angle transport.

\paragraph{Dynamical and structure-preserving networks.}
Viewing depth as dynamical evolution has produced continuous-depth, augmented-state, antisymmetric, Hamiltonian, and stable residual networks \cite{chen2018neuralode,dupont2019augmented,chang2018antisymmetric,greydanus2019hamiltonian,haber2017stable}. Structure-preserving variants encode Hamiltonian, symplectic, or volume-preserving biases for learned physical dynamics \cite{zhong2020symplectic,jin2020sympnets,zhu2022vpnets}, while reversible networks and flows use invertibility mainly for memory efficiency or density modeling \cite{gomez2017revnet,behrmann2019invertible,dinh2017realnvp}. LLN instead constrains the repeated latent transport stack; its embedding and readout remain unrestricted, so dissipative observed-state maps are still expressible.

\paragraph{Neural operators and adaptive basis models.}
Neural operators learn families of differential-equation solution maps, with Fourier Neural Operators, physics-informed neural operators, and recent surveys emphasizing discretization transfer, partial differential equation (PDE) constraints, and simulation acceleration \cite{li2021fno,li2024pino,azizzadenesheli2024neural}. Recent work further studies sensitivity constraints, discretization mismatch, and Kolmogorov-Arnold network (KAN) style operator architectures \cite{behroozi2025sensitivity,gao2025discretization,lee2026kano}. Adaptive-basis models, including radial-basis-function networks and Kolmogorov-Arnold Networks, replace fixed pointwise nonlinearities with learnable basis expansions \cite{park1991rbf,liu2024kan,rigas2026initialization,li2025generalization}. LLN shares the scientific-dynamics setting and uses Gaussian basis learning, but it is evaluated as a compact fixed-resolution autoregressive predictor whose basis field refracts an angle variable inside a coupled transport map rather than serving as a standalone layer or resolution-independent operator kernel.

\section{Learnable Lens Networks}
\label{sec:method}

\begin{figure}[!hbp]
\centering
\includegraphics[width=\textwidth]{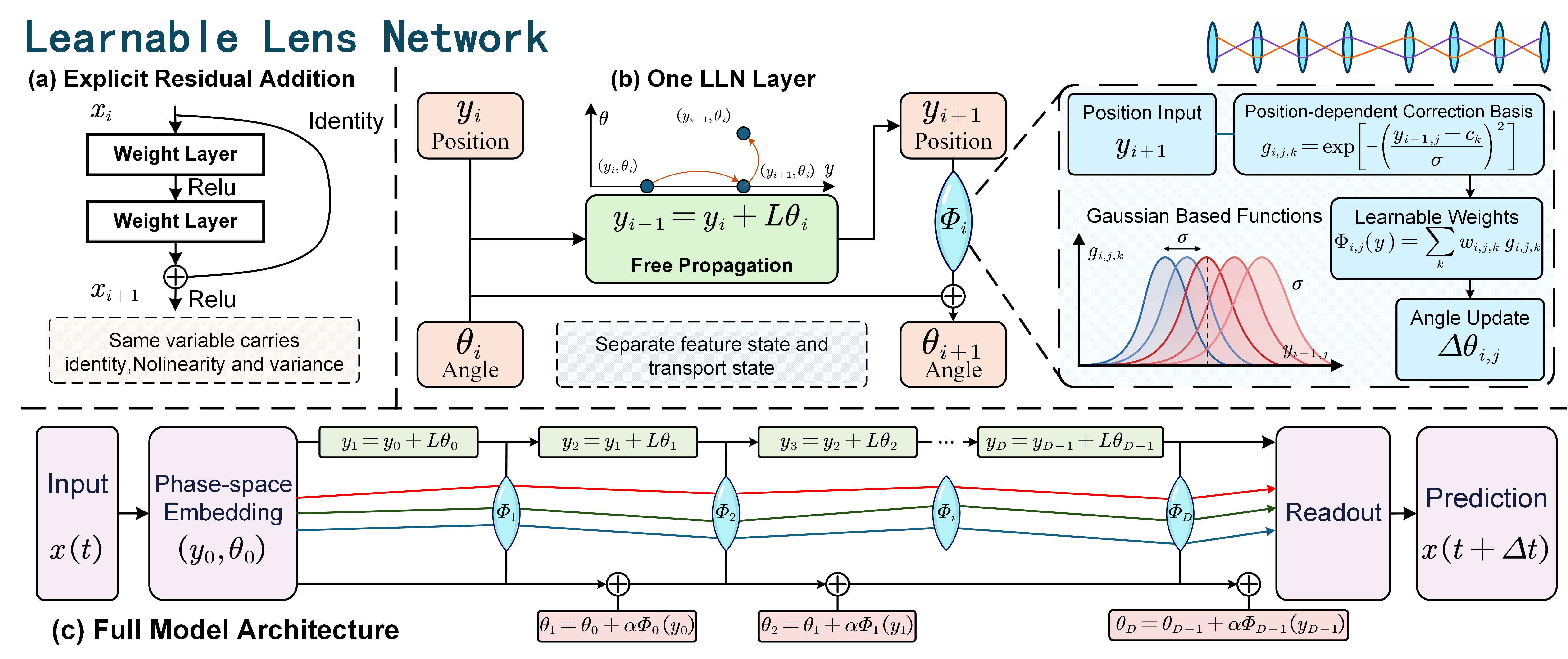}
\caption{\textbf{LLN architecture overview.}  (a) A conventional residual block adds a nonlinear transform in the same feature space, which can accumulate forward variance when normalization is removed.  (b) An LLN layer separates feature position $y$ and angle $\theta$; free propagation provides an implicit physical residual path, and a learnable lens field refracts the angle.  (c) Depth repeats the propagate-refract optical map, yielding a phase-space layer Jacobian with $\det(J_l)=1$.  Dashed connectors indicate how the single-layer operations in (b) are repeated across the stack in (c).}
\label{fig:architecture}
\end{figure}

\subsection{Phase-space representation}

Given an input $x\in\mathbb{R}^{d_{\mathrm{in}}}$, LLN maps the input into an augmented phase space with two coupled states:
\begin{align}
  y_0 &= W_yx+b_y,\\
  \theta_0 &= \tanh(W_\theta x+b_\theta),
\end{align}
where $y_0,\theta_0\in\mathbb{R}^{d}$. The variable $y_0$ represents the current feature position, while $\theta_0$ is an auxiliary directional state that controls how information is transported through depth. The bounded initialization of $\theta_0$ prevents excessively large initial displacements and provides a natural scale constraint when normalization layers are absent.

This phase-space representation is general for vector-valued inputs. In our experiments, low-dimensional chaotic systems are embedded with $d_{\mathrm{in}}=3$, while spatial and mechanical systems use input dimensions determined by the discretization resolution or number of masses, as described in the supplementary experimental details.

\subsection{Optical transport layer}

Each LLN layer consists of free propagation followed by learnable refraction:
\begin{align}
  y_{l+1} &= y_l+L\theta_l,
  \label{eq:prop}\\
  \theta_{l+1} &= \theta_l+\phi_l(y_{l+1}).
  \label{eq:refract}
\end{align}
The transport distance $L$ controls the position update, while the nonlinear field $\phi_l$ changes the directional state at the propagated position.

The lens field $\phi_l$ is parameterized by a coordinate-wise Gaussian basis expansion:
\begin{equation}
  \phi_l(y)_j=
  \sum_{k=1}^{K_{\mathrm{lens}}}
  a_{l,j,k}
  \exp\left[-\left(\frac{y_j-c_k}{\sigma}\right)^2\right],
  \label{eq:lens}
\end{equation}
where $a_{l,j,k}$ are learnable coefficients, $c_k$ are fixed control points, and $\sigma$ controls the locality of each basis function. This formulation provides a flexible and interpretable representation of the learned refraction field, allowing different feature coordinates to undergo localized trajectory transformations.

\begin{algorithm}[!htbp]
\caption{One-step LLN forward map}
\label{alg:lln_forward}
\begin{algorithmic}[1]
\Require state $x$; depth $D$; optical horizon $T$; lens fields $\{\phi_l\}_{l=0}^{D-1}$
\Ensure one-step prediction $\hat{x}_{+}$
\State Set the per-layer propagation length $L \gets T/D$.
\State Embed the position: $y \gets W_yx+b_y$.
\State Initialize the angle: $\theta \gets \tanh(W_\theta x+b_\theta)$.
\For{$l=0,\ldots,D-1$}
  \State Propagate the position: $y \gets y+L\theta$.
  \State Refract the angle: $\theta \gets \theta+\phi_l(y)$.
\EndFor
\State Read out the next state: $\hat{x}_{+}\gets W_oy+b_o$.
\State \Return $\hat{x}_{+}$.
\end{algorithmic}
\end{algorithm}
\vspace{-0.9em}

After $D$ transport layers, the final prediction is produced from the
position component as $\hat{x}^{+}=W_o y_D+b_o$. Because the readout
depends only on $y_D$, the terminal refraction updates the auxiliary angle
state $\theta_D$ without changing the one-step prediction. We retain this
update so that all $D$ stages share the same propagate-refract form and
the latent transport remains a uniform composition of square phase-space
maps. All reported parameter and arithmetic counts include this terminal
lens; the resulting efficiency ratios are therefore conservative relative
to an implementation that omits the readout-inactive update. The
directional state $\theta_D$ thus serves as an internal transport variable
rather than an output representation.

Algorithm~\ref{alg:lln_forward} assembles the embedding, optical transport, and readout into the complete one-step LLN forward map. We parameterize the per-layer distance as $L=T/D$, where $T$ is the total optical horizon. Increasing depth therefore introduces more learned refraction stages while keeping the nominal cumulative free-propagation distance fixed, separating architectural depth from the overall transport scale.

\subsection{Structured residual transport}

Unlike conventional residual blocks that combine identity transport and nonlinear transformation within a single feature variable, LLN separates these two roles into coupled phase-space states. The position variable $y_l$ stores the current representation, while $\theta_l$ determines its subsequent displacement. Consequently, information can propagate across depth through an explicit transport mechanism, while nonlinear transformations are introduced through the learned refraction field.

This separation provides a geometric interpretation of deep feature evolution. A sample can be viewed as a trajectory evolving through a sequence of learned optical transformations, where each layer modifies the trajectory rather than directly overwriting the representation. The resulting model remains a standard differentiable neural architecture while introducing a structured state transition mechanism.

\section{Geometric Analysis of LLN Dynamics}
\label{sec:geometric_analysis}

\subsection{Phase-space Jacobian}

We analyze one LLN layer through its Jacobian in the augmented phase space. Let $H_l:=\partial \phi_l(y_{l+1})/\partial y_{l+1}$ denote the Jacobian of the lens field. The propagation and refraction derivatives are
\begin{align}
\frac{\partial y_{l+1}}{\partial y_l} &= I,
& \frac{\partial y_{l+1}}{\partial \theta_l} &= LI,\nonumber\\
\frac{\partial \theta_{l+1}}{\partial y_l} &= H_l,
& \frac{\partial \theta_{l+1}}{\partial \theta_l} &= I+LH_l.
\end{align}

Therefore, the complete phase-space Jacobian of one LLN layer is

\begin{equation}
J_l=
\begin{bmatrix}
I & LI\\
H_l & I+LH_l
\end{bmatrix}.
\label{eq:jacobian}
\end{equation}

\begin{proposition}[Global invertibility and phase-space volume preservation]
\label{prop:lln_volume}
For any continuously differentiable lens field $\phi_l$, the LLN transport map
$F_l:(y_l,\theta_l)\mapsto(y_{l+1},\theta_{l+1})$ is a globally invertible
diffeomorphism with $\det(J_l)=1$, and hence preserves volume in the augmented phase space.
\end{proposition}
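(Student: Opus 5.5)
The plan is to write $F_l$ as a composition of two shear maps, each of which is explicitly invertible with a unit-determinant Jacobian, and then conclude by the chain rule and multiplicativity of determinants. Concretely, set $P(y,\theta)=(y+L\theta,\theta)$ (free propagation) and $R_l(y,\theta)=(y,\theta+\phi_l(y))$ (refraction). Equations~\eqref{eq:prop}--\eqref{eq:refract} say exactly that $F_l=R_l\circ P$, since refraction is evaluated at the already propagated position $y_{l+1}$.

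The first step is global invertibility, which I will establish by exhibiting the inverse directly rather than appealing to the inverse function theorem; the latter would only give local invertibility. $P$ is linear with inverse $P^{-1}(y,\theta)=(y-L\theta,\theta)$. $R_l$ has inverse $R_l^{-1}(y,\theta)=(y,\theta-\phi_l(y))$, which one can check directly because $R_l$ leaves $y$ unchanged. Thus $F_l^{-1}=P^{-1}\circ R_l^{-1}$, written explicitly as
\[
y_l=y_{l+1}-L\bigl(\theta_{l+1}-\phi_l(y_{l+1})\bigr),\qquad \theta_l=\theta_{l+1}-\phi_l(y_{l+1}).
\]
This is a two-sided inverse defined on all of $\mathbb{R}^{2d}$ with no restriction on $\phi_l$ beyond its being a function. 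Since $\phi_l$ is $C^1$, both $F_l$ and $F_l^{-1}$ are compositions of $C^1$ maps, hence $C^1$. Therefore $F_l$ is a global diffeomorphism.

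The second step is the determinant. Using the block Jacobian~\eqref{eq:jacobian}, I will factor $J_l$ as
\[
J_l=\begin{bmatrix} I & 0\\ H_l & I\end{bmatrix}\begin{bmatrix} I & LI\\ 0 & I\end{bmatrix}.
\]
The factors are the Jacobians of $R_l$ and $P$, and multiplying them out reproduces the blocks $I$, $LI$, $H_l$, $I+LH_l$. Each factor is block triangular with identity diagonal blocks, so its determinant is $1$, and hence $\det J_l=1$. As a cross-check, one can use the Schur complement with respect to the invertible top-left block: $\det J_l=\det(I)\det\bigl(I+LH_l-H_l\cdot LI\bigr)=\det I=1$.

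Finally, volume preservation follows from the change-of-variables formula: for any measurable $A$, $\mathrm{vol}(F_l(A))=\int_A|\det J_l|=\mathrm{vol}(A)$, which is valid because $F_l$ is a $C^1$ diffeomorphism. The only real subtlety is the global rather than local invertibility, and the explicit shear inverse handles that cleanly. I expect no genuine obstacle beyond one point of care: $H_l$ must be evaluated at $y_{l+1}$, so the chain rule has to be applied through $P$ for the factorization to match~\eqref{eq:jacobian}.
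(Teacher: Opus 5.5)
Your proof is correct and follows essentially the same route as the paper. Both arguments exhibit the explicit global inverse (your formula agrees with the paper's), show $\det J_l=1$, and conclude volume preservation by change of variables. The only variation is that the paper computes the determinant via the Schur complement, which you give as a cross-check, while your primary shear factorization is the one the paper uses later for the symplectic corollary.
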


\begin{proof}
Applying the Schur complement with respect to the upper-left block $I$ gives
\begin{align}
\det(J_l)
&=\det(I)\det\!\left((I+LH_l)-H_lI^{-1}(LI)\right)\nonumber\\
&=\det(I+LH_l-LH_l)=1 .
\end{align}
Moreover, the inverse is explicit:
\begin{align}
\theta_l&=\theta_{l+1}-\phi_l(y_{l+1}),\qquad
y_l=y_{l+1}-L\theta_l .
\label{eq:lln_inverse}
\end{align}
It is globally defined and continuously differentiable whenever $\phi_l$ is. The change-of-variables formula then establishes global phase-space volume preservation.
\end{proof}

For the internal transport stack $\mathcal{T}_D=F_{D-1}\circ\cdots\circ F_0$, the multiplicative property of determinants gives
\begin{equation}
J_{\mathcal{T}_D}=J_{D-1}\cdots J_1J_0,
\qquad
\det(J_{\mathcal{T}_D})=\prod_{l=0}^{D-1}\det(J_l)=1 .
\end{equation}
This determinant statement applies to the repeated latent transport $\mathcal{T}_D$. Because the input embedding and output readout may change dimension, the end-to-end predictor retains the freedom to represent dissipative and other non-volume-preserving target maps. Appendix~\ref{sec:technical_details} proves the stronger symplectic property of the implemented coordinate-wise Gaussian transport and gives exact parameter-scaling results.

\paragraph{Implications.}
The volume-preserving property provides LLN with a geometric inductive bias that differs from unconstrained layer compositions. In particular, the internal transport cannot contract or expand phase-space volume, while the encoder and readout retain the flexibility required for general prediction tasks. Exact counting in Appendix~\ref{sec:lln_complexity} further shows that the depth-dependent learnable component scales as $O(DhK)$ for LLN, compared with $O(Dh^2)$ for a dense MLP of width $h$.

The determinant-level result leads directly to the empirical diagnostics. A unit determinant constrains the product $\prod_i\sigma_i(J_l)=1$, while individual singular values may evolve differently across depth. The implemented symplectic layer has reciprocal singular-value pairing, making gradient transport, activation statistics, and rollout stability natural empirical questions for Section~\ref{sec:experiments}.

This distinction separates LLN from architectures designed primarily for invertibility or likelihood estimation, such as normalizing flows. LLN does not use volume preservation as an objective for density modeling; rather, it incorporates phase-space geometry as an architectural bias for stable deep transformations and long-horizon dynamical prediction.

\section{Experiments}
\label{sec:experiments}

\subsection{Experimental setup}

We evaluate LLN on long-horizon autoregressive dynamics, where a learned one-step transition model is repeatedly composed and small errors can accumulate over time. Such settings test whether the underlying architecture supports stable forward evolution at increasing depths.

We use simulated trajectories rather than externally downloaded benchmark datasets. The systems cover Lorenz chaotic flows \cite{lorenz1963deterministic}, a synthetic mass-spring chain, and spatially discretized PDE surrogates based on Burgers dynamics \cite{burgers1948mathematical}, advection-diffusion dynamics \cite{hundsdorfer2003numerical}, and Allen-Cahn-type reaction-diffusion dynamics \cite{allen1979microscopic}. The training objective is the one-step transition map $F_{\Delta t}:x(t)\rightarrow x(t+\Delta t)$ generated by fourth-order Runge-Kutta simulation. During evaluation, the learned model is recursively applied as $\hat{x}_{k+1}=f_\theta(\hat{x}_k)$, and performance is measured by the normalized mean squared error (MSE) over long free-running rollouts.

All state variables are standardized using statistics computed from the training trajectories. We compare LLN with a plain ReLU MLP, residual MLP, scaled residual MLP with residual magnitude $1/\sqrt{D}$, ResMLP-BN, and LayerScale \cite{touvron2021going}. Unless otherwise specified, results are averaged over 10 random seeds. In addition to rollout error, we report parameter count and gradient-based diagnostics to analyze depth stability.


\begin{figure}[!htb]
\centering
\includegraphics[width=\columnwidth]{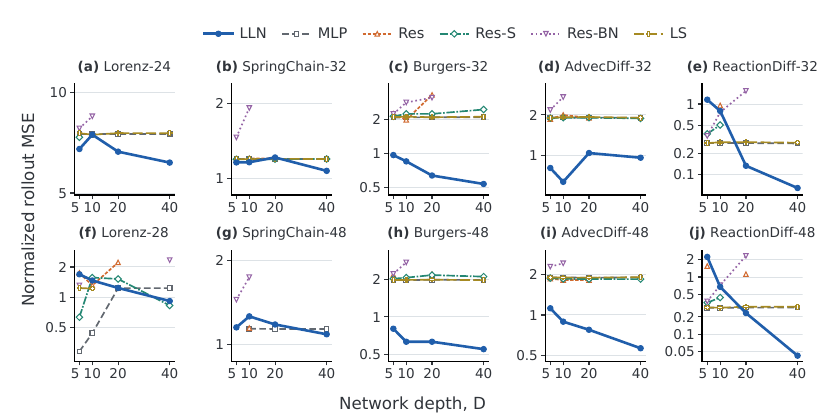}
\caption{\textbf{All-model rollout error across network depths.}
Curves show the mean normalized rollout MSE over ten runs for LLN and five non-LLN architectures at each evaluated depth. MLP denotes the plain multilayer perceptron, Res denotes the residual MLP, Res-S denotes the scaled residual MLP, Res-BN denotes the residual MLP with batch normalization (BN), and LS denotes LayerScale. Lower is better.}
\label{fig:physics_representative}
\end{figure}

\subsection{Experiment 1: Long-horizon rollout performance}

\begin{table*}[!h]
\centering
\caption{\textbf{Depth-40 selected-deployment rollout performance.}
Values denote normalized MSE (mean $\pm$ std over 10 seeds) over a 500-step free-running rollout.
The comparator is the lowest-MSE member of the designated non-LLN suite for each protocol block; all paired entries match depth, trajectory split, seed set, and evaluation metric.
The MSE and parameter ratios use the displayed deployments, whereas the floating-point operation (FLOP) ratio is a strict matched-width audit with both models set to $D=40$ and $h=128$.
Lower is better; bold marks the lower paired rollout MSE.}
\label{tab:physics_d40_conservative_main}
\resizebox{\textwidth}{!}{
\begin{tabular}{llcccccc}
\toprule
System & Family & LLN MSE & Selected non-LLN & Comparator MSE
& MSE ratio & Param ratio & FLOP ratio \\
\midrule
Lorenz-24
& chaotic flow
& $\mathbf{6.170\!\pm\!0.953}$
& Plain MLP
& $7.516\!\pm\!0.010$
& $0.821$ & $28.5\%$ & $29.2\%$ \\

Lorenz-28
& chaotic flow
& $0.920\!\pm\!0.545$
& Scaled ResMLP
& $\mathbf{0.831\!\pm\!0.942}$
& $1.106$ & $6.4\%$ & $29.0\%$ \\

SpringChain-32
& mechanical
& $\mathbf{1.070\!\pm\!0.379}$
& Plain MLP
& $1.193\!\pm\!0.001$
& $0.897$ & $38.9\%$ & $32.0\%$ \\

SpringChain-48
& mechanical
& $\mathbf{1.087\!\pm\!0.017}$
& Plain MLP
& $1.135\!\pm\!0.001$
& $0.958$ & $57.5\%$ & $33.4\%$ \\

Burgers-32
& fluid
& $\mathbf{0.537\!\pm\!0.155}$
& Plain MLP
& $2.096\!\pm\!0.020$
& $0.256$ & $5.5\%$ & $30.6\%$ \\

Burgers-48
& fluid
& $\mathbf{0.550\!\pm\!0.122}$
& LayerScale
& $1.973\!\pm\!0.030$
& $0.279$ & $5.9\%$ & $31.1\%$ \\

AdvecDiff-32
& transport
& $\mathbf{0.962\!\pm\!0.596}$
& Scaled ResMLP
& $1.883\!\pm\!0.064$
& $0.511$ & $22.1\%$ & $30.3\%$ \\

AdvecDiff-48
& transport
& $\mathbf{0.564\!\pm\!0.318}$
& Scaled ResMLP
& $1.854\!\pm\!0.033$
& $0.304$ & $23.8\%$ & $31.1\%$ \\

ReactionDiff-32
& reaction-diffusion
& $\mathbf{0.065\!\pm\!0.042}$
& Plain MLP
& $0.277\!\pm\!0.006$
& $0.235$ & $22.1\%$ & $30.6\%$ \\

ReactionDiff-48
& reaction-diffusion
& $\mathbf{0.042\!\pm\!0.031}$
& Plain MLP
& $0.291\!\pm\!0.005$
& $0.143$ & $23.8\%$ & $31.3\%$ \\
\bottomrule
\end{tabular}}
\end{table*}

We first examine whether LLN improves long-horizon prediction under a fixed depth of $D=40$. Table~\ref{tab:physics_d40_conservative_main} summarizes the rollout performance across ten system configurations from five dynamical families. It is a selected-deployment comparison assembled from three recorded protocol blocks. Seven original-grid rows use six-epoch Adam runs with method-specific recorded learning-rate settings; the two SpringChain rows use dedicated six-epoch Adam completion sweeps; and Lorenz-28 uses a shared 20-epoch AdamW recipe for both model families. Within every row, the simulated trajectory, temporal split, standardization, depth, seed set, and 500-step evaluation metric are identical. Appendix Tables~\ref{tab:experiment_protocol_summary} and~\ref{tab:main_lln_deployments} record the protocol index and selected LLN deployments.

The results reveal a consistent advantage of LLN on dissipative spatial systems. For Burgers dynamics, LLN reduces the rollout error to $25.6\%$ and $27.9\%$ of the selected comparator on the two resolutions, while achieving this performance with only $5.5\%$ and $5.9\%$ of the comparator parameters. Similar improvements are observed for advection-diffusion and reaction-diffusion systems, where LLN achieves lower rollout errors with substantially fewer parameters.

\begin{figure}[!htbp]
\centering
\includegraphics[width=1\textwidth]{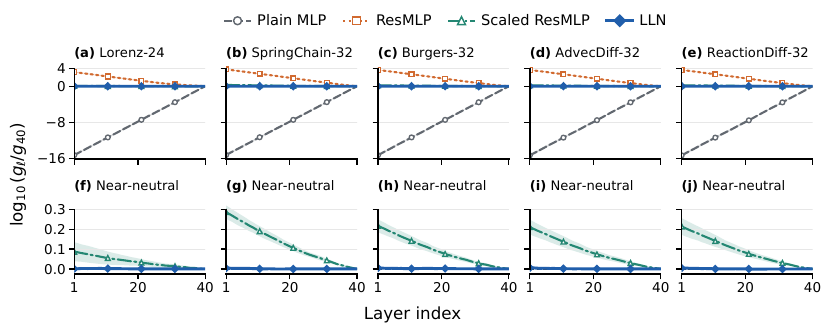}
\caption{\textbf{Matched layerwise gradient transport at $D=40$ and $h=128$.}
Curves show the mean $\log_{10}(g_l/g_{40})$ over 10 initialization
seeds on five representative systems; bands show one standard deviation.
The lower panels magnify Scaled ResMLP and LLN near neutral transport.}
\label{fig:gradient_transport_matched}
\end{figure}

We separately test initial-condition generalization beyond each main-table reference trajectory. Appendix Table~\ref{tab:multitrajectory_confirmation} freezes all ten selected $D=40$ LLN/comparator deployments before training on four perturbed trajectories and evaluating on four disjoint perturbed trajectories. Across six confirmatory training seeds, LLN has lower geometric-mean rollout error on seven of ten systems; five seed-clustered 95\% intervals remain fully below one (SpringChain-48, Burgers-32/48, AdvecDiff-48, and ReactionDiff-48). The Lorenz settings and SpringChain-32 favor the comparator under this stricter multi-trajectory protocol.

The advantage narrows on the mechanical systems, although LLN remains competitive with fewer parameters. Lorenz-28 is the one main-table row where LLN is slightly behind the selected comparator.

The matched arithmetic audit strengthens the efficiency result across every system in Table~\ref{tab:physics_d40_conservative_main}. With identical input/output dimensions, depth $D=40$, and width $h=128$, LLN uses only $29.0$-$33.4\%$ of the named baseline's simple arithmetic FLOPs, a reduction of $66.6$-$71.0\%$. Counts include dense multiply-adds, bias and ReLU operations, Gaussian-coordinate arithmetic, lens reduction, and residual/state updates.

The depth sweep in Figure~\ref{fig:physics_representative} reveals a consistent pattern. LLN improves with depth on both Burgers and reaction-diffusion resolutions, outperforming all reported baselines on reaction-diffusion at $D\geq20$. It also achieves the lowest mean error on both advection-diffusion settings at every evaluated depth. Across the remaining systems, LLN maintains competitive rollout performance while retaining substantial parameter and computational advantages. Overall, the results identify dissipative spatial dynamics as the regime in which LLN derives the most pronounced and consistent benefit from increased depth.

\subsection{Experiment 2: Matched depth-wise gradient transport}

The determinant and symplectic results characterize latent transport geometry; the remaining trainability question is how gradients move through a matched deep stack. Since prior work shows that deep-stack trainability can depend strongly on initialization, normalization placement, and residual-branch scale \cite{xiong2020layernorm,huang2020tfixup,liu2020admin,wang2024deepnet,bordelon2024depthwise}, we test gradient transport directly under a strictly matched architecture-level protocol. Plain MLP, ResMLP, Scaled ResMLP, and LLN all use depth $D=40$ and hidden width $h=128$; LLN uses $T=1$, $K=12$, and a learned initial angle. For each of ten initialization seeds, every model receives the same standardized data and four deterministic batches of 512 samples before any optimizer update.

Let $g_l=\operatorname{RMS}(\partial\mathcal{L}/\partial h_l)$ denote the root-mean-square (RMS) loss gradient of the state after repeated layer $l$, with $h_l=y_l$ for LLN. We report $\log_{10}(g_1/g_{40})$, which normalizes each model by its own final repeated-layer gradient scale and measures relative transport through the repeated stack. A value of zero means that the first and final repeated layers receive equal RMS gradients; negative and positive values indicate attenuation and amplification, respectively.

\begin{table}[!htbp]
\centering
\footnotesize
\caption{\textbf{Matched gradient transport at initialization.} Values are $\log_{10}(g_1/g_{40})$ (mean $\pm$ s.d. over 10 seeds). Zero is ideal; bold marks the closest value to zero.}
\label{tab:gradient_transport_matched}
\setlength{\tabcolsep}{3.8pt}
\begin{tabular*}{\textwidth}{@{\extracolsep{\fill}}lcccc@{}}
\toprule
System & Plain MLP & ResMLP & Scaled ResMLP & LLN \\
\midrule
Lorenz-24 & $-15.298\!\pm\!0.222$ & $3.140\!\pm\!0.152$ & $0.087\!\pm\!0.048$ & \textbf{$0.000\!\pm\!0.008$} \\
Lorenz-28 & $-15.298\!\pm\!0.219$ & $3.141\!\pm\!0.151$ & $0.088\!\pm\!0.048$ & \textbf{$0.001\!\pm\!0.008$} \\
SpringChain-32 & $-15.270\!\pm\!0.257$ & $3.741\!\pm\!0.064$ & $0.286\!\pm\!0.034$ & \textbf{$0.003\!\pm\!0.005$} \\
SpringChain-48 & $-15.306\!\pm\!0.214$ & $3.818\!\pm\!0.079$ & $0.333\!\pm\!0.034$ & \textbf{$0.003\!\pm\!0.008$} \\
Burgers-32 & $-15.326\!\pm\!0.265$ & $3.620\!\pm\!0.085$ & $0.217\!\pm\!0.032$ & \textbf{$0.004\!\pm\!0.006$} \\
Burgers-48 & $-15.355\!\pm\!0.207$ & $3.697\!\pm\!0.093$ & $0.251\!\pm\!0.043$ & \textbf{$0.004\!\pm\!0.007$} \\
AdvecDiff-32 & $-15.356\!\pm\!0.293$ & $3.624\!\pm\!0.083$ & $0.211\!\pm\!0.037$ & \textbf{$0.005\!\pm\!0.007$} \\
AdvecDiff-48 & $-15.297\!\pm\!0.125$ & $3.700\!\pm\!0.101$ & $0.248\!\pm\!0.032$ & \textbf{$0.005\!\pm\!0.005$} \\
ReactionDiff-32 & $-15.348\!\pm\!0.274$ & $3.624\!\pm\!0.088$ & $0.214\!\pm\!0.042$ & \textbf{$0.004\!\pm\!0.005$} \\
ReactionDiff-48 & $-15.338\!\pm\!0.300$ & $3.703\!\pm\!0.101$ & $0.251\!\pm\!0.035$ & \textbf{$0.008\!\pm\!0.004$} \\
\bottomrule
\end{tabular*}
\end{table}

\begin{figure}[!htbp]
\centering
\includegraphics[width=1\textwidth]{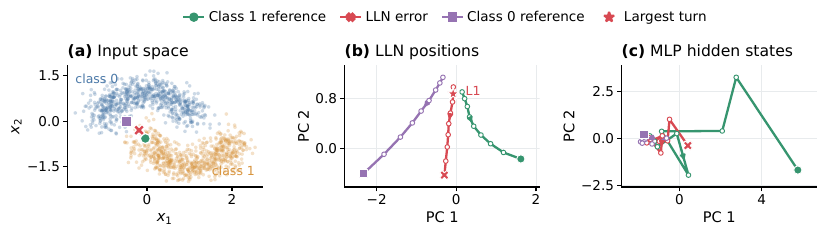}
\caption{\textbf{Low-dimensional optical case study.} (a) A deterministic boundary error and two nearest correct references. (b) LLN position paths. (c) Plain MLP paths for the same inputs.}
\label{fig:lln_error_case_study}
\end{figure}

Table~\ref{tab:gradient_transport_matched} shows a consistent architectural
separation. Plain MLP attenuates first-layer gradients by approximately
15 orders of magnitude, unscaled ResMLP amplifies them by more than three
orders, and residual scaling reduces but does not eliminate amplification.
LLN remains closest to neutral on every system, with mean log-ratios between
$0.000$ and $0.008$, corresponding to $g_1/g_{40}$ between $1.00$ and
$1.02$. Figure~\ref{fig:gradient_transport_matched} confirms the same
behavior across five representative systems: LLN's position-channel
gradients remain near the final-layer reference throughout depth. Together,
these results support near-neutral relative gradient transport at
initialization under the matched protocol.

\subsection{Experiment 3: Interpreting LLN through optical trajectories}

To expose LLN's internal position-angle evolution, we use two-moons classification under a frozen case-study protocol: data seed, model seed, noise level, width, depth, and training budget are fixed, and a deterministic boundary rule selects the least-confident LLN error plus the nearest correct sample from each relevant class.

Figures~\ref{fig:lln_error_case_study} and
\ref{fig:lln_error_refraction} trace the selected boundary error through
position trajectories, lens-induced turns, and the final readout. Each
principal component analysis (PCA) basis is fitted to all 400 held-out
representations, with its first two components explaining $97.0\%$ of LLN
position-state variance and $72.7\%$ of MLP hidden-state variance. The
highlighted layers are the misclassified sample's three largest turns under
$\Delta\alpha_l$. Across seven LLN errors, the median maximum turn is
$16.96^\circ$, versus $15.84^\circ$ for 393 correct samples; the displayed
case lies at the 55th percentile and therefore represents an ordinary
boundary decision rather than an extreme-turn outlier.

\begin{figure}[!htbp]
    \centering
    \includegraphics[width=1\textwidth]{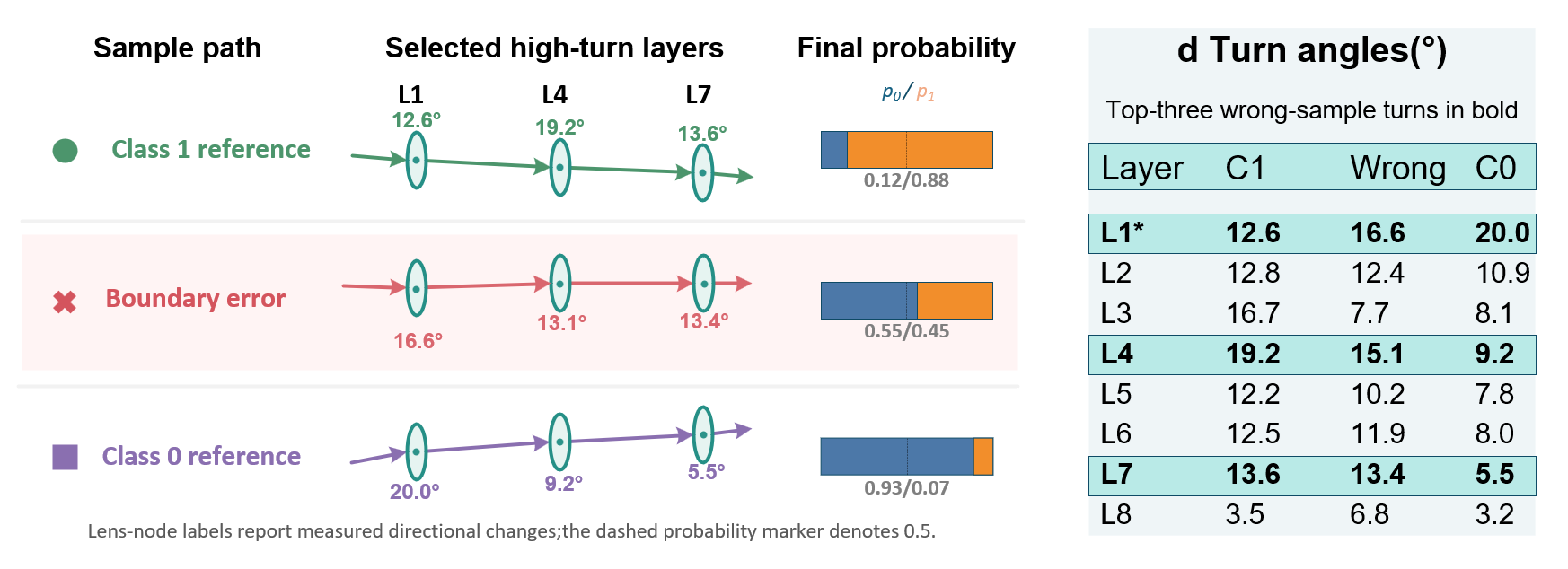}
    \caption{\textbf{Layerwise LLN refraction and readout.} Paths trace the class-1 reference, boundary error, and class-0 reference through the wrong sample's largest-turn layers (L1, L4, L7); the right table lists all-layer turns.}
\label{fig:lln_error_refraction}
\end{figure}

\section{Discussion and Conclusion}

LLN reframes depth as learned transport in an augmented phase space rather than repeated feature-space residual addition. Across the evaluated $D=40$ rollouts, it gives the clearest gains on dissipative spatial dynamics with fewer parameters than selected same-depth comparators. The all-system multi-trajectory confirmation preserves the strongest gains on five systems; the mechanism evidence shows near-neutral initialization gradients across ten systems, a best full structure in nine of ten controlled ablations, and traceable position-angle trajectories on two moons. Together, LLN emerges as a parameter-efficient phase-space alternative with analyzable latent geometry, compact depth-wise parameter scaling, empirical gradient neutrality, and inspectable learned trajectories.

\subsection*{Limitations and future work}

The benefits of LLN vary across dynamical regimes and deployment settings,
with the clearest improvements on the evaluated dissipative spatial systems.
Volume preservation and symplecticity constrain latent geometry but do not
guarantee favorable singular-value behavior, rollout accuracy, Lyapunov
stability, or end-to-end sensitivity. Evaluation is limited to simulated
systems, fixed-depth MLP-style one-step models, and a descriptive two-moons
case study. Future work should test broader architectures and develop
spectral and Lyapunov analyses beyond determinant-level geometry.

\subsection*{AI use statement}

Generative AI tools were used solely for auxiliary tasks, including language polishing, improving presentation clarity, refining figure captions, assisting with \LaTeX{} typesetting, and supporting preliminary literature searches. The core research idea, architectural design, mathematical formulations, theoretical results, theorem and proposition statements, and all associated derivations and proofs were independently developed and verified by the authors. Generative AI was not used to originate the proposed method, establish theoretical claims, design or implement the experiments, or draw conclusions from the results. All AI-assisted content was carefully reviewed, revised, and cross-checked by the authors, who take full responsibility for the accuracy, originality, and integrity of the final manuscript.
\subsection*{Ethics statement}

This study uses simulated and synthetic benchmarks, with no human subjects, personal data, or deployed decision-making systems.

\subsection*{Reproducibility statement}

Sections~\ref{sec:method}-\ref{sec:experiments} specify the model, theory, protocols, metrics, baselines, depth, and seed averaging; Appendices~\ref{sec:technical_details}-\ref{sec:supplementary_experimental_details} provide proofs, counts, ablations, configurations, and extended depth results. Code, seeds, raw tables, and figure scripts will be included in the supplement.

\bibliographystyle{iclr2027_conference}
\bibliography{iclr2027_conference}

\clearpage

\appendix
\raggedbottom
\section{Additional Theory and Expressivity}
\label{sec:technical_details}

This appendix completes the theoretical characterization used in the main text. We first show that the implemented coordinate-wise Gaussian transport is symplectic, derive the resulting reciprocal singular-value structure, and state precisely why this latent property does not constrain the observed dynamics to be Hamiltonian. We then give exact parameter and operation counts under the same convention used in the FLOP audit, followed by a constructive universal-approximation result. Appendix~\ref{sec:supplementary_experimental_details} collects the extended empirical evidence, and Appendix~\ref{sec:additional_optical_cases} contains additional optical trajectory cases.

\subsection{Symplectic structure of the implemented transport}

Let
\begin{equation}
\Omega=
\begin{bmatrix}
0&I\\
-I&0
\end{bmatrix}
\end{equation}
denote the canonical symplectic matrix on the augmented state $z=(y,\theta)$. A differentiable map with Jacobian $J$ is symplectic when $J^\top\Omega J=\Omega$.

\begin{corollary}[Symplectic Gaussian transport]
\label{cor:lln_symplectic}
If the lens Jacobian $H_l=\partial\phi_l/\partial y$ is symmetric, then one LLN transport layer is symplectic. Consequently, the coordinate-wise Gaussian lens in \eqref{eq:lens}, every finite composition of such layers, and their explicit inverse maps are symplectic.
\end{corollary}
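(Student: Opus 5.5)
The approach is to factor the layer map $F_l$ into its two elementary pieces and verify the symplectic condition on each, instead of multiplying out $J_l^\top\Omega J_l$ directly. By \eqref{eq:jacobian},
\begin{equation}
J_l=\begin{bmatrix} I & 0\\ H_l & I\end{bmatrix}\begin{bmatrix} I & LI\\ 0 & I\end{bmatrix}=:R_lP,
\end{equation}
where $P$ is the Jacobian of free propagation $(y,\theta)\mapsto(y+L\theta,\theta)$ and $R_l$ is the Jacobian of refraction $(y,\theta)\mapsto(y,\theta+\phi_l(y))$, evaluated at the propagated point. A product of symplectic matrices is symplectic, since
\begin{equation}
(AB)^\top\Omega(AB)=B^\top(A^\top\Omega A)B=B^\top\Omega B=\Omega .
\end{equation}
So it suffices to check $P^\top\Omega P=\Omega$ and $R_l^\top\Omega R_l=\Omega$.

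For a block lower-triangular shear $\begin{bmatrix} I&0\\ S&I\end{bmatrix}$, a short block computation shows that $\Omega$ is preserved if and only if $S^\top=S$. The analogous statement holds for an upper shear $\begin{bmatrix} I&S\\0&I\end{bmatrix}$. For $P$ we have $S=LI$, which is trivially symmetric. For $R_l$ we have $S=H_l$, which is symmetric by hypothesis. This gives the first claim.

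Next, I would check the hypothesis for the coordinate-wise Gaussian lens \eqref{eq:lens}. Since $\phi_l(y)_j$ depends only on $y_j$, the Jacobian $H_l$ is diagonal, with entries
\begin{equation}
(H_l)_{jj}=\sum_k a_{l,j,k}\left(-\frac{2(y_j-c_k)}{\sigma^2}\right)\exp\!\left[-\left(\frac{y_j-c_k}{\sigma}\right)^2\right].
\end{equation}
A diagonal matrix is symmetric, so the first claim applies at every point.

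Finite compositions follow by the chain rule: $J_{\mathcal{T}_D}$ is a product of the symplectic matrices $J_l$, each evaluated at the appropriate intermediate point, and is therefore symplectic.

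For the inverse maps, Proposition~\ref{prop:lln_volume} gives the explicit inverse \eqref{eq:lln_inverse}, and its Jacobian at the image point is $J_l^{-1}$. From $J^\top\Omega J=\Omega$, and using $\Omega^{-1}=-\Omega$, we get $J^{-1}=-\Omega J^\top\Omega$. Direct substitution, or the observation that the symplectic matrices form a group, then yields $J^{-\top}\Omega J^{-1}=\Omega$. Alternatively, one can note that the inverse is itself a composition of the shears $(y,\theta)\mapsto(y,\theta-\phi_l(y))$ and $(y,\theta)\mapsto(y-L\theta,\theta)$, whose off-diagonal blocks $-H_l$ and $-LI$ are symmetric, so the shear criterion above applies directly.

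The main obstacle is the sign and orientation bookkeeping in the shear criterion. With $\Omega=\begin{bmatrix}0&I\\-I&0\end{bmatrix}$, the lower-shear computation gives the off-diagonal block $I$ unchanged and the lower-right block equal to $S-S^\top$, which must vanish. I would verify this block product carefully once and reuse it for every factor. I also need to make sure that evaluating $H_l$ at $y_{l+1}$ rather than at $y_l$ does not break the factorization. It does not, because the chain rule is applied pointwise.
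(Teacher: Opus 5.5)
Your proposal is correct and matches the paper's proof. Both use the factorization $J_l=R_{H_l}P_L$ into a refraction shear and a drift shear, the criterion that such a shear is symplectic exactly when its off-diagonal block is symmetric, the diagonal $H_l$ of the coordinate-wise Gaussian lens, and closure of symplectic matrices under products and inverses; your explicit-shear treatment of the inverse is a concrete variant of that last step. One bookkeeping fix: for $A=\begin{bmatrix}I&0\\S&I\end{bmatrix}$ one gets $A^\top\Omega A=\begin{bmatrix}S-S^\top&I\\-I&0\end{bmatrix}$, so the defect sits in the upper-left block (it is the upper shear whose defect $S^\top-S$ lands in the lower-right block), which leaves the criterion $S=S^\top$ unchanged.
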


\begin{proof}
The layer Jacobian in \eqref{eq:jacobian} factors as
\begin{equation}
J_l=
\underbrace{\begin{bmatrix}I&0\\H_l&I\end{bmatrix}}_{R_{H_l}}
\underbrace{\begin{bmatrix}I&LI\\0&I\end{bmatrix}}_{P_L}.
\end{equation}
The drift shear $P_L$ is symplectic because $LI$ is symmetric, and the refraction shear $R_{H_l}$ is symplectic whenever $H_l=H_l^\top$. Products and inverses of symplectic matrices are symplectic. For \eqref{eq:lens}, each output $\phi_l(y)_j$ depends only on $y_j$, so $H_l$ is diagonal and therefore symmetric.
\end{proof}

\begin{corollary}[Reciprocal singular-value pairing]
\label{cor:reciprocal_singular_values}
The singular values of the Jacobian of any implemented LLN transport stack occur in reciprocal pairs. In particular, its largest and smallest singular values satisfy $\sigma_{\max}=1/\sigma_{\min}$.
\end{corollary}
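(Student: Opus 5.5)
We take $J=J_{\mathcal{T}_D}$, the Jacobian of the implemented stack at an arbitrary point. By Corollary~\ref{cor:lln_symplectic}, each factor $J_l$ satisfies $J_l^\top\Omega J_l=\Omega$, and so does their product. Hence $J^\top\Omega J=\Omega$. The argument is purely linear-algebraic from here: we show that for any real symplectic matrix $J$, the spectrum of $J^\top J$ is closed under $\lambda\mapsto 1/\lambda$, with matching multiplicities.

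First, rearrange the symplectic identity. Since $\Omega^{-1}=-\Omega=\Omega^\top$ and $J$ is invertible (Proposition~\ref{prop:lln_volume} gives $\det J=1$), we get $J^\top=\Omega J^{-1}\Omega^{-1}$. Applying the same identity to $J^\top$ (symplectic matrices form a group closed under transpose, since $J\Omega^{-1}J^\top=\Omega^{-1}$ follows from inverting $J^\top\Omega J=\Omega$) yields
\begin{equation}
J^\top J=\Omega\,(JJ^\top)^{-1}\,\Omega^{-1}.
\end{equation}
Thus $J^\top J$ is similar to $(JJ^\top)^{-1}$. Next, $JJ^\top$ and $J^\top J$ have the same eigenvalues with multiplicities, because $JJ^\top=J(J^\top J)J^{-1}$. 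Therefore the multiset of eigenvalues $\{\sigma_i^2\}$ of $J^\top J$ equals the multiset $\{1/\sigma_i^2\}$. These eigenvalues are all positive since $J$ is invertible. Taking square roots shows that the singular values are invariant, as a multiset, under $\sigma\mapsto1/\sigma$.

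To turn invariance into an explicit pairing, sort the singular values as $\sigma_1\ge\cdots\ge\sigma_{2d}$. The reciprocal map reverses order, so invariance of the multiset forces $\sigma_i=1/\sigma_{2d+1-i}$ for every $i$. In particular $\sigma_{\max}=1/\sigma_{\min}$. Singular values equal to $1$ pair among themselves. This is consistent with $\prod_i\sigma_i=|\det J|=1$.

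The only delicate step is the multiplicity bookkeeping: the pairing must be a genuine bijection, not just a set-level statement. The similarity argument above handles this, because similarity preserves the full characteristic polynomial. As a sanity check, one could also give an alternative route via the symplectic singular value decomposition $J=U\,\mathrm{diag}(\Sigma,\Sigma^{-1})V$ with $U,V$ orthogonal symplectic, but we will rely on the elementary similarity argument.
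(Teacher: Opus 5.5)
Your approach is correct in substance and close to the paper's, but the displayed identity has an algebra slip that needs fixing. Substituting $J^\top=\Omega J^{-1}\Omega^{-1}$ and $J=\Omega J^{-\top}\Omega^{-1}$ gives $J^\top J=\Omega J^{-1}J^{-\top}\Omega^{-1}$. Here $J^{-1}J^{-\top}=(J^\top J)^{-1}$, not $(JJ^\top)^{-1}=J^{-\top}J^{-1}$.

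As written, your display is false. The same substitutions give $\Omega(JJ^\top)^{-1}\Omega^{-1}=JJ^\top$, so your equation would assert $J^\top J=JJ^\top$. That already fails for the drift shear $P_L$ with $L\neq 0$, which is the Jacobian of an implemented stack with zero lens coefficients. The correct identity is $J^\top J=\Omega(J^\top J)^{-1}\Omega^{-1}$. It gives closure of the spectrum of $J^\top J$ under $\lambda\mapsto 1/\lambda$ directly, so your step through $JJ^\top=J(J^\top J)J^{-1}$ becomes unnecessary. The similarity you state in words, $J^\top J\sim(JJ^\top)^{-1}$, is still true, so nothing downstream breaks once the display is corrected.

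The paper takes a shorter path from the same symplectic identity. It writes $J^{-1}=-\Omega J^\top\Omega$ and uses that $\Omega$ is orthogonal. So $J^{-1}$ has the same singular values, as a multiset, as $J^\top$, and hence as $J$. Since the singular values of $J^{-1}$ are the reciprocals of those of $J$, the pairing follows.

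Orthogonal invariance of singular values handles multiplicities automatically, so the paper needs no characteristic-polynomial bookkeeping. Your version replaces this with an eigenvalue and similarity argument on the Gram matrix, which is equally valid. Your explicit sorting step $\sigma_i=1/\sigma_{2d+1-i}$ is a useful addition: the paper leaves implicit the passage from multiset invariance to $\sigma_{\max}=1/\sigma_{\min}$ and to a genuine perfect pairing.
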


\begin{proof}
For a symplectic Jacobian $J$, the identity $J^\top\Omega J=\Omega$ gives $J^{-1}=-\Omega J^\top\Omega$. Since $\Omega$ is orthogonal, $J^{-1}$ and $J^\top$ have the same singular values. The singular values of $J^{-1}$ are the reciprocals of those of $J$, establishing the pairing.
\end{proof}

Corollary~\ref{cor:reciprocal_singular_values} is a structural statement, not a uniform conditioning bound: a symplectic Jacobian may simultaneously have very large and very small singular values. Moreover, the result concerns the square latent map $\mathcal{T}_D:(y_0,\theta_0)\mapsto(y_D,\theta_D)$. The input embedding and position-only readout are generally non-square and are not symplectic, so the end-to-end predictor is not restricted to Hamiltonian, symplectic, or volume-preserving dynamics in the observed state space.

\subsection{Parameter and operation scaling}
\label{sec:lln_complexity}

\begin{proposition}[Exact parameter counts]
\label{prop:parameter_counts}
Consider input dimension $n$, output dimension $m$, hidden width $h$, depth $D$, and $K$ fixed Gaussian control points per hidden coordinate. The implemented LLN with learned initial angle and the matched plain MLP contain, respectively,
\begin{align}
P_{\mathrm{LLN}}&=2h(n+1)+DhK+m(h+1),\label{eq:lln_parameter_count}\\
P_{\mathrm{MLP}}&=h(n+1)+Dh(h+1)+m(h+1).\label{eq:mlp_parameter_count}
\end{align}
For fixed $n,m,h,K$,
\begin{equation}
\lim_{D\rightarrow\infty}\frac{P_{\mathrm{LLN}}}{P_{\mathrm{MLP}}}
=\frac{K}{h+1}.
\label{eq:parameter_ratio_limit}
\end{equation}
Thus, at the matched audit setting $h=128$ and $K=12$, the depth-dominant parameter ratio is $12/129\approx9.3\%$.
\end{proposition}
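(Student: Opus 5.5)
The plan is a direct count of learnable tensors in each architecture, followed by an elementary limit. Throughout, ``parameter'' means a learnable scalar. The Gaussian centers $c_k$ and the width $\sigma$ in \eqref{eq:lens} are fixed hyperparameters, and $L=T/D$ is determined by the optical horizon, so none of these is counted.

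\textbf{LLN count.} I will go through Algorithm~\ref{alg:lln_forward} in order.
\begin{itemize}
\item The position embedding $W_y\in\mathbb{R}^{h\times n}$, $b_y\in\mathbb{R}^h$ contributes $h(n+1)$.
\item The learned initial angle $\tanh(W_\theta x+b_\theta)$ contributes another $h(n+1)$, giving $2h(n+1)$ in total. This is the reason for the phrase ``with learned initial angle'' in the statement.
\item Each of the $D$ lens fields carries coefficients $a_{l,j,k}$ with $j\le h$ and $k\le K$, so the lenses contribute $DhK$. As remarked after Algorithm~\ref{alg:lln_forward}, this includes the terminal, readout-inactive lens.
\item The readout $W_o\in\mathbb{R}^{m\times h}$, $b_o$ contributes $m(h+1)$.
\end{itemize}
Summing these gives \eqref{eq:lln_parameter_count}.

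\textbf{MLP count.} The matched plain MLP is fixed by the convention used in the FLOP audit: an input layer $\mathbb{R}^n\to\mathbb{R}^h$, then $D$ repeated hidden layers $\mathbb{R}^h\to\mathbb{R}^h$ with biases, then a readout $\mathbb{R}^h\to\mathbb{R}^m$. These contribute $h(n+1)$, $Dh(h+1)$ and $m(h+1)$ respectively, which gives \eqref{eq:mlp_parameter_count}. The main point needing care here is the bookkeeping convention. I must make sure ``depth $D$'' counts the repeated $h\times h$ layers, matching the $D$ repeated lens stages, and is not the total number of affine maps. Otherwise the constant terms would change; the limit would not.

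\textbf{Limit.} I divide numerator and denominator by $D$. With $n,m,h,K$ fixed,
\[
\frac{P_{\mathrm{LLN}}}{P_{\mathrm{MLP}}}=\frac{hK+O(1/D)}{h(h+1)+O(1/D)}\to\frac{K}{h+1},
\]
which is \eqref{eq:parameter_ratio_limit}. Substituting $h=128$ and $K=12$ gives $12/129\approx 0.093$.
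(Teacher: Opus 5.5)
Your proposal is correct and matches the paper's proof essentially step for step. The LLN count is the same: two affine input projections, $D$ lens arrays of shape $h\times K$ with the fixed centers and width excluded, and one affine readout. The MLP count is likewise an input layer, $D$ hidden $h\times h$ affine layers and a readout, and you then divide by $D$ and take the limit exactly as the paper does.
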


\begin{proof}
LLN uses two affine input projections, contributing $2h(n+1)$ parameters; $D$ lens-weight arrays of shape $h\times K$, contributing $DhK$; and one affine readout, contributing $m(h+1)$. The control-point locations and Gaussian width are fixed buffers. The plain MLP uses one affine input projection, $D$ affine $h\times h$ hidden layers, and one affine readout, giving \eqref{eq:mlp_parameter_count}. Dividing both expressions by $D$ and taking the limit proves \eqref{eq:parameter_ratio_limit}. Among the audited residual variants, residual scaling introduces no additional parameter, LayerScale adds $Dh$ learned coefficients, and BatchNorm adds $2Dh$ affine parameters.
\end{proof}

For completeness, we also state the exact operation convention used to construct the FLOP columns in Tables~\ref{tab:physics_d40_conservative_main} and~\ref{tab:new_baselines_all_depths_detailed}. One multiply-add is counted as two arithmetic FLOPs; each ReLU is counted as one elementary operation; and exponential and hyperbolic-tangent evaluations are reported separately because their hardware costs are implementation dependent. Under this convention, one-sample LLN and plain-MLP forward passes require
\begin{align}
C_{\mathrm{LLN}}
&=2(2nh+hm)+(2h+m)+Dh(6K+3),\label{eq:lln_arithmetic_count}\\
C_{\mathrm{MLP}}
&=2(nh+Dh^2+hm)+2(D+1)h+m.\label{eq:mlp_arithmetic_count}
\end{align}
LLN additionally evaluates $DhK$ exponentials and $h$ hyperbolic tangents, whereas the MLP evaluates no transcendental functions. The LLN depth term combines $Dh(2K-1)$ operations for weighted lens reduction, $4DhK$ for basis coordinates, and $4Dh$ for state updates. Relative to \eqref{eq:mlp_arithmetic_count}, the audited unscaled ResMLP adds $Dh$ residual additions, the scaled ResMLP and LayerScale each add $2Dh$ multiply-add operations, and ResMLP-BN adds $5Dh$ normalization, affine, and residual operations. Hence the LLN-to-plain-MLP depth-dominant arithmetic ratio is
\begin{equation}
\lim_{D\rightarrow\infty}\frac{C_{\mathrm{LLN}}}{C_{\mathrm{MLP}}}
=\frac{6K+3}{2h+2},
\end{equation}
which is $75/258\approx29.1\%$ for $h=128$ and $K=12$. These counts establish parameter and arithmetic scaling; they do not by themselves imply a proportional wall-clock latency reduction, because kernel fusion, memory access, and transcendental throughput depend on the execution platform.

\subsection{Universal approximation}

We complement the geometric results with an expressivity statement. The theorem does not replace the conditioning analysis; it shows that constraining the internal transport does not prevent the end-to-end architecture from approximating general continuous maps on compact domains.

\begin{theorem}[Universal approximation by shallow LLN]
\label{thm:lln_ua}
Let $K\subset\mathbb{R}^{n}$ be compact, let $f:K\rightarrow\mathbb{R}^{q}$ be continuous, and let $\varepsilon>0$. Consider an LLN with nonzero propagation length $L$, a Gaussian lens dictionary containing at least one fixed control point $c$ with width $\sigma>0$ per coordinate,
\begin{equation}
  g_{c,\sigma}(s)=\exp\left[-\left(\frac{s-c}{\sigma}\right)^2\right],
\end{equation}
the input embedding from Section~\ref{sec:method}, two optical layers, and a linear readout from the final position channel. If the hidden dimension is allowed to be sufficiently large, then there exists a choice of LLN parameters such that
\begin{equation}
  \sup_{x\in K}\|\mathrm{LLN}(x)-f(x)\|_{\infty}<\varepsilon .
\end{equation}
\end{theorem}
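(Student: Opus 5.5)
We reduce the statement to the classical universal approximation theorem for single-hidden-layer networks with a non-polynomial activation. Write out the two-layer LLN explicitly. With embedding $y_0=W_yx+b_y$ and $\theta_0=\tanh(W_\theta x+b_\theta)$, the first layer gives $y_1=y_0+L\theta_0$ and $\theta_1=\theta_0+\phi_0(y_1)$. The second layer gives $y_2=y_1+L\theta_1=y_0+2L\theta_0+L\phi_0(y_1)$, and the readout is $W_oy_2+b_o$. The plan is to make $y_2$ contain, coordinatewise, a ridge function $g_{c,\sigma}(w_j^\top x+\beta_j)$ plus terms that the readout can either cancel or absorb. Then $W_oy_2+b_o$ is a finite linear combination of Gaussian ridge functions, plus affine terms and tanh ridge terms that are themselves harmless.

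First, the cleanest choice is $W_\theta=0$ and $b_\theta=0$. Then $\theta_0=0$, $y_1=y_0$, and
\[
y_2=y_0+L\phi_0(y_0).
\]
Take the lens coefficients so that $\phi_0(y)_j=a_j\,g_{c,\sigma}(y_j)$, using the single control point $c$ and zeroing the rest. The second lens $\phi_1$ is irrelevant, since it only affects $\theta_2$, which is not read out. With rows $y_{0,j}=w_j^\top x+\beta_j$, each coordinate is
\[
y_{2,j}=w_j^\top x+\beta_j+La_j\,g_{c,\sigma}(w_j^\top x+\beta_j).
\]
A linear readout then produces
\[
\sum_j v_{ij}La_j\,g_{c,\sigma}(w_j^\top x+\beta_j)+\Big(\sum_j v_{ij}w_j\Big)^{\!\top}x+\text{const}.
\]

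Second, we cancel the affine leakage. Two options are available, and I would include both for robustness. One option is to reserve extra hidden coordinates with $a_j=0$, which carry the affine functions $w_j^\top x+\beta_j$; the readout subtracts the total affine part using these. The other option is to duplicate each coordinate with $(w_j,\beta_j,a_j)$ and $(w_j,\beta_j,-a_j)$, and read out the difference, which is $2La_j\,g_{c,\sigma}(w_j^\top x+\beta_j)$ exactly. The duplication trick is the cleaner route. In either case, because $L\neq 0$, the coefficients $La_j$ range over all of $\mathbb{R}$.

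Third, we invoke the density result. The function $g_{c,\sigma}(t)=\exp(-((t-c)/\sigma)^2)$ is continuous, bounded, and not a polynomial. By the Leshno--Lin--Pinkus--Schocken theorem, the span of $\{x\mapsto g_{c,\sigma}(w^\top x+\beta)\}$ is dense in $C(K)$ under the sup norm; the fixed shift $c$ is absorbed into $\beta$. So each component $f_i$ is approximated to within $\varepsilon$ by a finite sum. We then stack the needed hidden units for all $q$ outputs, doubled for the cancellation, into one hidden dimension $h$, and choose the readout rows accordingly. This gives $\sup_{x\in K}\|\mathrm{LLN}(x)-f(x)\|_\infty<\varepsilon$.

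The main obstacle is making sure the architecture's constraints are respected. The lens acts coordinatewise through a fixed dictionary, the angle is bounded via $\tanh$, and the second layer must actually route the lens output into the position channel. This is exactly why two layers are needed: after one layer, $\phi_0$ affects only $\theta_1$, and it reaches $y$ only through the second propagation, scaled by $L\neq 0$. If setting $W_\theta=0$ were disallowed, or if the angle initialization were learned, I would instead keep $\theta_0$ and cancel the $2L\theta_0$ tanh terms by the same duplication trick, since paired coordinates share $\theta_0$ but receive opposite lens coefficients.
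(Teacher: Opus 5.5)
Your proof is correct and is essentially the paper's construction. Both use zero initial angle via $W_\theta=0$, $b_\theta=0$, a single active Gaussian in the first lens, the second propagation moving it into the position channel with factor $L\neq 0$, and paired coordinates whose readout difference cancels the affine part. The differences are cosmetic: the paper pairs lens coefficients $(1,0)$ and folds $c,\sigma$ into the embedding $y_0=c+\sigma(w_i^\top x+\beta_i)$, whereas you pair $\pm a_j$ and absorb $c,\sigma$ into the ridge parameters, which is equivalent.

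Your closing fallback is unnecessary, since $W_\theta=0$ is an admissible parameter choice. As stated it is also incomplete: a nonzero $\theta_0$ shifts the lens argument to $y_1=y_0+L\theta_0$, so the paired difference becomes $2La_j\,g_{c,\sigma}\bigl(w_j^\top x+\beta_j+L\theta_{0,j}\bigr)$ rather than a pure Gaussian ridge function of $x$.
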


\begin{proof}
Let $g(t)=\exp(-t^2)$. Since $g$ is continuous and non-polynomial, the classical universal approximation theorem implies that, for any $\varepsilon>0$, there exists a finite Gaussian ridge expansion
\begin{equation}
  F(x)=b+\sum_{i=1}^{M}\gamma_i g(w_i^\top x+\beta_i),
  \label{eq:ridge_expansion}
\end{equation}
where $b,\gamma_i\in\mathbb{R}^q$, with $\sup_{x\in K}\|F(x)-f(x)\|_\infty<\varepsilon$ \cite{cybenko1989approximation,hornik1991approximation,leshno1993multilayer,park1991rbf}. We only need to show that a shallow LLN can realize any finite expansion of this form.

Choose $h\geq 2M$. For each term $i$, allocate two coordinates $(p_i,r_i)$ and use the affine position embedding to set
\begin{align}
  y_{0,p_i}(x)=y_{0,r_i}(x)=c+\sigma(w_i^\top x+\beta_i),\qquad
  \theta_{0,p_i}(x)=\theta_{0,r_i}(x)=0 .
\end{align}
The zero angles are realized exactly by setting $W_\theta=0$ and $b_\theta=0$, since $\tanh(0)=0$. The first propagation therefore leaves these coordinates unchanged. Activate one Gaussian control point in the first lens on coordinate $p_i$ and set the corresponding lens on $r_i$ to zero:
\begin{equation}
  \phi_{0,p_i}(y)=g_{c,\sigma}(y_{p_i}),\qquad \phi_{0,r_i}(y)=0 .
\end{equation}
Then $\theta_{1,p_i}=g(w_i^\top x+\beta_i)$ and $\theta_{1,r_i}=0$. With the second lens set to zero, the second propagation gives
\begin{equation}
  y_{2,p_i}(x)-y_{2,r_i}(x)=L g(w_i^\top x+\beta_i).
\end{equation}
Finally, the linear readout assigns weights $\gamma_i/L$ and $-\gamma_i/L$ to $(p_i,r_i)$ and uses bias $b$, yielding exactly
\begin{equation}
  b+\sum_{i=1}^{M}\frac{\gamma_i}{L}\bigl(y_{2,p_i}(x)-y_{2,r_i}(x)\bigr)=F(x).
\end{equation}
Thus the constructed LLN approximates $f$ uniformly on $K$ to accuracy $\varepsilon$.
\end{proof}

\begin{remark}
The construction uses only a restricted subset of LLN parameters: paired coordinates, zero initial angles, one active Gaussian lens term, and a zero second lens. The practical architecture used in the experiments contains this construction as a special case. The theorem is existential: it establishes density in $C(K,\mathbb{R}^q)$ but does not provide a finite-width approximation rate, a conditioning guarantee, or an optimization guarantee. There is no conflict with latent volume preservation because the embedding and readout are unrestricted, generally non-square maps.
\end{remark}

\FloatBarrier
\section{Extended Empirical Evidence}
\label{sec:supplementary_experimental_details}

\subsection{Experiment Protocols and Selected Configurations}

All dynamics trajectories are generated from the source-code right-hand sides and RK4 integrator rather than loaded from external data files. For spatial systems, the suffix denotes the spatial resolution or system size. Specifically, Burgers-$n$, AdvecDiff-$n$, and ReactionDiff-$n$ contain $n$ discretization points, while SpringChain-$n$ contains $n$ masses with a state dimension of $2n$ due to positions and momenta. Lorenz systems always have three state variables; the suffix corresponds to the dynamical parameter rather than dimensionality. Thus Lorenz-24 and Lorenz-28 are parameter settings, not 24- and 28-dimensional datasets.

Appendix Table~\ref{tab:experiment_protocol_summary} indexes the empirical protocols, while Appendix Table~\ref{tab:main_lln_deployments} lists the selected LLN deployments for the main $D=40$ rollout table. This keeps result tables focused on their diagnostic messages instead of repeating hyperparameters. Rows marked as all-system cover the full ten-system set; figure-level and hyperparameter diagnostics that use subsets are explicitly labeled by their displayed systems and are not used as all-system claims.

The implementation additionally uses a fixed scalar lens scale $s$, which multiplies the learned lens correction before the angle update:
\begin{equation}
\theta_{l+1}=\theta_l+s\phi_l(y_{l+1}).
\label{eq:scaled_lens_update}
\end{equation}
In the main text, $s$ is absorbed into the definition of $\phi_l$ to keep the notation compact. The scale controls the magnitude of lens refraction without introducing additional learnable parameters and is set to $s=1$ unless otherwise listed. Setting $s=0$ disables the refraction branch while retaining its parameter tensors, yielding the \emph{w/o lens} ablation in Appendix Table~\ref{tab:lln_structure_ablation}.

Unless otherwise stated, dynamics rollouts use 10 random seeds, 3000 training pairs, 700 test pairs, and 500-step free-running evaluation. The original-grid recipe uses six Adam epochs, with LLN learning rate $10^{-3}$ and the tuned non-LLN recipe using learning rate $3{\times}10^{-4}$, weight decay $10^{-6}$, and gradient clipping 1.0. The Lorenz-28 completion protocol uses 20 AdamW epochs with learning rate $5{\times}10^{-4}$, weight decay $10^{-5}$, gradient clipping 1.0, and cosine decay.

\begin{table*}[!t]
\centering
\small
\caption{\textbf{Protocol index for the empirical evidence.}
System abbreviations are L=Lorenz, SC=SpringChain, B=Burgers,
AD=AdvecDiff, and RD=ReactionDiff.}
\label{tab:experiment_protocol_summary}
\setlength{\tabcolsep}{4.5pt}
\renewcommand{\arraystretch}{1.08}
\begin{tabular}{
@{}
L{0.20\textwidth}
L{0.21\textwidth}
L{0.39\textwidth}
L{0.10\textwidth}
@{}}
\toprule
Evidence block
& Systems
& Selection/configuration
& Protocol \\
\midrule

Table~\ref{tab:physics_d40_conservative_main}: main rollout
& L24/28; SC32/48; B32/48; AD32/48; RD32/48
& Selected LLN configuration versus the lowest-MSE designated
same-depth non-LLN comparator; LLN configurations are listed in
Table~\ref{tab:main_lln_deployments}.
& R0, SC-C, L28-C \\

\addlinespace[0.18em]

Table~\ref{tab:multitrajectory_confirmation}: multi-trajectory confirmation
& L24/28; SC32/48; B32/48; AD32/48; RD32/48
& Main-table LLN and comparator configurations fixed before
confirmatory retraining; reported estimates use training seeds 11-16.
& MT \\

\addlinespace[0.18em]

Figure~\ref{fig:experiment1_temporal_rollout}: rollout curves
& RD48, AD48, B48, L24
& Configurations selected using seeds 0-2 and evaluated using
disjoint seeds 3-9; $D=40$, $K=12$, $s=1$, and learned $\theta_0$.
This is a representative visualization rather than an all-system claim.
& C0 \\

\addlinespace[0.18em]

Table~\ref{tab:gradient_transport_matched}: gradient transport
& L24/28; SC32/48; B32/48; AD32/48; RD32/48
& All models use $D=40$ and $h=128$; LLN additionally uses
$T=1$, $K=12$, $s=1$, and learned $\theta_0$.
& G0 \\

\addlinespace[0.18em]

Table~\ref{tab:lln_structure_ablation}: structure ablation
& L24/28; SC32/48; B32/48; AD32/48; RD32/48
& Main-table LLN configurations; the three ablations set
$s=0$ (\emph{w/o lens}), $\theta_0=0$, or $L=0$
(\emph{w/o propagation}), respectively.
& R0 \\

\addlinespace[0.18em]

Table~\ref{tab:new_baselines_all_depths_detailed}: all depths
& L24/28; SC32/48; B32/48; AD32/48; RD32/48
& $D\in\{5,10,20,40\}$; baselines use $h=128$.
The LLN grid varies $T$ and $h$ with default $K=12$ and $s=1$;
completion rows use their recorded system-specific configurations.
& R0, SC-C, L28-C \\

\addlinespace[0.18em]

Architecture-selection factorial
& L28, SC32, B32, AD32
& Factorial over $D$, $h$, and $T$, with default $K=12$, $s=1$,
and learned $\theta_0$; used as a representative
deployment-selection diagnostic.
& AS \\

\addlinespace[0.18em]

Optical hyperparameter profiles
& L28, SC32, B32, AD32
& One-factor profiles over lens scale, basis count, and learned
versus zero $\theta_0$; used as a representative hyperparameter
diagnostic.
& HP \\

\addlinespace[0.18em]

Experiment~3: optical case study
& Two-moons classification
& Archived configuration fixed before analysis; displayed samples
are selected using deterministic confidence and nearest-reference rules.
& CS \\

\bottomrule
\end{tabular}

\vspace{0.25em}
\footnotesize
\emph{Protocol codes.}
R0: original-grid rollout;
SC-C: SpringChain completion recipe;
L28-C: Lorenz-28 completion recipe;
MT: disjoint train/test trajectories;
C0: rollout-curve confirmation;
G0: initialization gradient audit;
AS: architecture selection;
HP: optical hyperparameters;
CS: optical case study.
\end{table*}

\begin{table*}[!t]
\centering
\caption{\textbf{Selected LLN deployments for the main \(D=40\) rollout results.}
\(T\) denotes the optical horizon, \(h\) the hidden width, \(K\) the number of Gaussian basis functions, and \(s\) the fixed lens-output scale. All configurations use a learned initial angle \(\theta_0\).}
\label{tab:main_lln_deployments}
\small
\setlength{\tabcolsep}{8pt}
\renewcommand{\arraystretch}{1.08}
\begin{tabular*}{\textwidth}{@{\extracolsep{\fill}}lcccc@{}}
\toprule
System & \(T\) & \(h\) & \(K\) & \(s\) \\
\midrule
Lorenz-24        & \(3\)    & \(384\) & \(12\) & \(1\)     \\
Lorenz-28        & \(0.75\) & \(128\) & \(8\)  & \(0.075\) \\
SpringChain-32   & \(1\)    & \(512\) & \(8\)  & \(1.25\)  \\
SpringChain-48   & \(0.25\) & \(512\) & \(12\) & \(1\)     \\
Burgers-32       & \(2\)    & \(64\)  & \(12\) & \(1\)     \\
Burgers-48       & \(2\)    & \(64\)  & \(12\) & \(1\)     \\
AdvecDiff-32     & \(1\)    & \(256\) & \(12\) & \(1\)     \\
AdvecDiff-48     & \(1\)    & \(256\) & \(12\) & \(1\)     \\
ReactionDiff-32  & \(2\)    & \(256\) & \(12\) & \(1\)     \\
ReactionDiff-48  & \(1\)    & \(256\) & \(12\) & \(1\)     \\
\bottomrule
\end{tabular*}
\end{table*}

\subsection{Frozen Multi-Trajectory Confirmation}

The main table controls temporal splits and random initialization within each system's reference simulation. To test initial-condition generalization separately, we freeze all ten selected $D=40$ LLN and comparator deployments from Table~\ref{tab:physics_d40_conservative_main} and generate disjoint training and test trajectory sets. For each system, four training trajectories and four test trajectories use fixed, independently drawn perturbations with RMS scale 2\% of the reference initial state; their trajectory seeds are $\{201,202,203,204\}$ and $\{101,102,103,104\}$, respectively. Each training trajectory contributes 3000 one-step pairs. Every trained model is evaluated on two non-overlapping 500-step windows from each test trajectory.

Table~\ref{tab:multitrajectory_confirmation} reports geometric means across all trajectory windows after first averaging log ratios within each training seed. The result table filters to confirmatory seeds 11-16; earlier seed-10 pilot rows in the raw archive are excluded from every estimate. The 95\% intervals resample confirmatory seeds 11-16 as clusters, so the eight trajectory-window evaluations from one trained model are not treated as independent replicates. Configurations, trajectory seeds, windows, and analysis code are fixed for these six seeds, which are also disjoint from the earlier selection and confirmation seeds.

\begin{center}
\begin{minipage}{\textwidth}
\centering
\small
\captionof{table}{\textbf{Frozen multi-trajectory confirmation across all ten $D=40$ systems.} MSE values are geometric means over six confirmatory training seeds, four unseen initial-condition trajectories, and two non-overlapping 500-step windows per trajectory. Ratio intervals are 95\% seed-cluster bootstrap intervals. Lower is better; bold marks the lower paired MSE, and bold ratios mark intervals entirely below one.}
\label{tab:multitrajectory_confirmation}
\setlength{\tabcolsep}{4.5pt}
\renewcommand{\arraystretch}{1.08}
\begin{tabular*}{\textwidth}{@{\extracolsep{\fill}}lccc@{}}
\toprule
System & LLN MSE & Comparator MSE & MSE ratio [95\% CI] \\
\midrule
Lorenz-24 & $2.027$ & \textbf{$0.554$} & $3.656\ [1.180,13.444]$ \\
Lorenz-28 & $2.760$ & \textbf{$0.448$} & $6.165\ [4.769,7.947]$ \\
SpringChain-32 & $1.919$ & \textbf{$0.935$} & $2.051\ [1.560,2.741]$ \\
SpringChain-48 & \textbf{$0.053$} & $0.950$ & $\mathbf{0.056}\ [0.037,0.083]$ \\
Burgers-32 & \textbf{$0.289$} & $0.999$ & $\mathbf{0.289}\ [0.177,0.482]$ \\
Burgers-48 & \textbf{$0.199$} & $4.922$ & $\mathbf{0.040}\ [0.00109,0.451]$ \\
AdvecDiff-32 & \textbf{$0.344$} & $120.233$ & $0.00286\ [9.44e-08,1.006]$ \\
AdvecDiff-48 & \textbf{$0.164$} & $28.028$ & $\mathbf{0.00585}\ [0.000135,0.142]$ \\
ReactionDiff-32 & \textbf{$0.914$} & $1.227$ & $0.745\ [0.454,1.066]$ \\
ReactionDiff-48 & \textbf{$0.569$} & $0.935$ & $\mathbf{0.608}\ [0.501,0.735]$ \\
\bottomrule
\end{tabular*}
\end{minipage}
\end{center}

Five seed-cluster intervals remain entirely below one: SpringChain-48, Burgers-32, Burgers-48, AdvecDiff-48, and ReactionDiff-48. AdvecDiff-32 and ReactionDiff-32 have LLN-favorable geometric means but intervals crossing one, while Lorenz-24, Lorenz-28, and SpringChain-32 favor the comparator. The confirmation therefore supports robust initial-condition generalization on a subset of systems but rules out a blanket all-system claim.

\subsection{Supplementary Temporal Rollout Curves}

Figure~\ref{fig:experiment1_temporal_rollout} gives a step-wise view of the main rollout behavior. The plotted configurations are selected using seeds $\{0,1,2\}$ only, retrained with disjoint confirmation seeds $\{3,\ldots,9\}$, and evaluated for 500 free-running steps. The three dissipative PDE representatives match the strongest pattern in Table~\ref{tab:physics_d40_conservative_main}: LLN remains below the selected baseline at every displayed step on ReactionDiff-48, AdvecDiff-48, and Burgers-48. Lorenz-24 is included as a boundary case and shows mixed chaotic-flow behavior.

\begin{center}
\begin{minipage}{\textwidth}
\centering
\includegraphics[width=\textwidth]{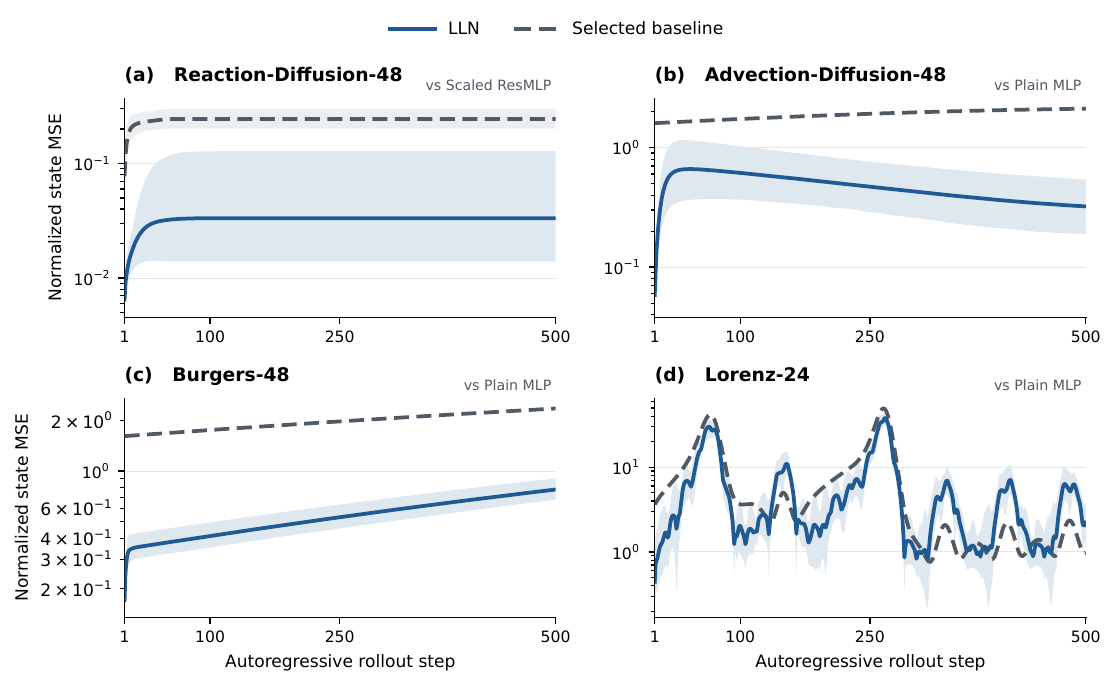}
\captionof{figure}{\textbf{Temporal rollout error at depth $D=40$.} Lines show geometric-mean normalized state MSE over seven confirmation seeds; bands are 95\% seed-bootstrap intervals. Lower is better.}
\label{fig:experiment1_temporal_rollout}
\end{minipage}
\end{center}

\subsection{Controlled Structure Ablation}

We isolate the three mechanisms that define LLN's propagate-refract update across all ten selected $D=40$ deployments: free propagation, lens refraction, and the learned initial angle. The full deployments are those in Table~\ref{tab:main_lln_deployments}; each uses a learned initial angle. Each ablation changes exactly one mechanism by setting the lens scale to zero, setting $\theta_0=0$, or setting the propagation step $L=0$. The model tensors, depth, optimizer, data, and random seeds otherwise remain unchanged.

\begin{table*}[t]
\centering
\small
\caption{\textbf{Complete LLN structure ablation across all ten systems at $D=40$.}
Values are 500-step normalized rollout MSE (mean $\pm$ s.d. over 10 independent stochastic runs). Extremely large divergent values are reported in scientific notation. Within each system, all variants use the same width, optical horizon, Gaussian basis count, optimizer, and data protocol; each ablation modifies only the indicated architectural component. Lower is better, and bold marks the best value in each row.}
\label{tab:lln_structure_ablation}
\setlength{\tabcolsep}{5.5pt}
\renewcommand{\arraystretch}{1.08}
\begin{tabular*}{\textwidth}{@{\extracolsep{\fill}}lcccc@{}}
\toprule
System
& Full LLN
& Without lens refraction
& Zero initial angle
& Without free propagation \\
\midrule
Lorenz-24
& \textbf{$6.18\!\pm\!0.98$}
& $8.54\!\pm\!3.94$
& $9.95\!\pm\!3.43$
& $\infty$ \\

Lorenz-28
& \textbf{$1.03\!\pm\!0.48$}
& $1.37\!\pm\!0.66$
& $5.6{\times}10^{10}$
& $1.0{\times}10^{5}$ \\

SpringChain-32
& $1.22\!\pm\!0.18$
& $1.22\!\pm\!0.13$
& $1.19\!\pm\!0.01$
& \textbf{$1.17\!\pm\!0.01$} \\

SpringChain-48
& \textbf{$1.09\!\pm\!0.02$}
& $1.10\!\pm\!0.01$
& $1.11\!\pm\!0.01$
& $1.11\!\pm\!0.00$ \\

Burgers-32
& \textbf{$0.54\!\pm\!0.16$}
& $0.79\!\pm\!0.26$
& $1.78\!\pm\!1.00$
& $82.25\!\pm\!253.94$ \\

Burgers-48
& \textbf{$0.55\!\pm\!0.12$}
& $0.79\!\pm\!0.27$
& $1.22\!\pm\!0.55$
& $1.9{\times}10^{16}$ \\

AdvecDiff-32
& \textbf{$0.95\!\pm\!0.57$}
& $3.61\!\pm\!2.27$
& $1.1{\times}10^{25}$
& $\infty$ \\

AdvecDiff-48
& \textbf{$0.56\!\pm\!0.31$}
& $1.32\!\pm\!0.88$
& $1.5{\times}10^{33}$
& $\infty$ \\

ReactionDiff-32
& \textbf{$0.064\!\pm\!0.041$}
& $2.66\!\pm\!2.23$
& $0.16\!\pm\!0.09$
& $6.2{\times}10^{33}$ \\

ReactionDiff-48
& \textbf{$0.040\!\pm\!0.033$}
& $1.85\!\pm\!0.75$
& $0.091\!\pm\!0.017$
& $0.28\!\pm\!0.04$ \\
\bottomrule
\end{tabular*}
\end{table*}

The complete LLN is best on nine of ten systems; the exception is SpringChain-32, where all variants are close and no-propagation is slightly lower. Removing lens refraction raises rollout error on every system, with small changes on the spring chains but large penalties on the PDE systems. Zeroing the initial angle worsens nine of ten systems, again excepting SpringChain-32. Setting $L=0$ creates severe divergence or orders-of-magnitude degradation on Lorenz and PDE systems, while the two spring-chain rows remain near the full model. These interventions support the coupled transport mechanism as important for most deployments, especially chaotic and dissipative systems, while identifying a mechanical boundary case rather than forcing a universal claim.

\subsection{All-Depth and Extended-Baseline Diagnostics}

Table~\ref{tab:new_baselines_all_depths_detailed} reports the fixed-depth rollout comparison over $D\in\{5,10,20,40\}$ on the ten-system set, using ten seeds per configuration. The original eight-system depth sweep is extended with SpringChain-48 and Lorenz-28 completion rows; the $D=40$ SpringChain-32 and SpringChain-48 LLN entries match the retuned deployments used in Table~\ref{tab:physics_d40_conservative_main}. For each row, the LLN and non-LLN entries are selected from the designated archived sweep at that depth, and the last three columns report LLN-to-comparator MSE, parameter, and strictly matched arithmetic-FLOP ratios. Non-finite or clearly divergent rollout values are shown as $\infty$. For the FLOP audit, LLN and its comparator use the same system dimension, depth, and hidden width $h=128$; LLN's $1536D$ exponential evaluations and 128 hyperbolic tangents per forward pass are reported separately from simple arithmetic FLOPs. The table expands the main depth-40 summary and makes LLN's strengthening margin at larger depths on dissipative PDEs explicit; LayerScale is included under the same tuned-baseline protocol.

\begin{table*}[!t]
\centering
\scriptsize
\setlength{\tabcolsep}{1.2pt}
\renewcommand{\arraystretch}{1.16}
\caption{\textbf{All-depth rollout comparison on the ten-system benchmark.}
Values are normalized rollout MSE (mean $\pm$ s.d. over 10 seeds);
the final columns report LLN-to-comparator MSE, parameter, and matched
arithmetic-FLOP ratios. LS denotes LayerScale and ``r.'' denotes ratio.
Lower is better; bold marks the lowest finite rollout MSE among the
compared models in each row.}
\label{tab:new_baselines_all_depths_detailed}
\resizebox{\textwidth}{!}{%
\begin{tabular}{@{}clccccccccc@{}}
\toprule
$D$ & System & LLN & MLP & Res & Res-S & Res-BN & LS
& MSE r. & Param r. & FLOP r. \\
\midrule

5 & Lorenz-24
& \textbf{$6.778\!\pm\!2.160$}
& 7.510 $\pm$ 0.066
& $\infty$
& 7.348 $\pm$ 0.805
& 7.813 $\pm$ 1.286
& 7.562 $\pm$ 0.294
& 0.922 & 0.054 & 0.301 \\

5 & Lorenz-28
& 1.692 $\pm$ 1.300
& \textbf{$0.292\!\pm\!0.165$}
& 1.742 $\pm$ 0.907
& 0.637 $\pm$ 0.862
& 1.313 $\pm$ 1.089
& 1.236 $\pm$ 0.332
& 5.790 & 0.078 & 0.303 \\

5 & SpringChain-32
& \textbf{$1.157\!\pm\!0.033$}
& 1.194 $\pm$ 0.003
& 1.187 $\pm$ 0.018
& 1.191 $\pm$ 0.009
& 1.461 $\pm$ 0.065
& 1.193 $\pm$ 0.006
& 0.975 & 0.985 & 0.490 \\

5 & SpringChain-48
& \textbf{$1.151\!\pm\!0.112$}
& $\infty$
& $\infty$
& $\infty$
& 1.448 $\pm$ 0.201
& $\infty$
& 0.794 & 0.826 & 0.560 \\

5 & Burgers-32
& \textbf{$0.969\!\pm\!0.319$}
& 2.090 $\pm$ 0.024
& 2.250 $\pm$ 0.127
& 2.138 $\pm$ 0.048
& 2.244 $\pm$ 0.270
& 2.105 $\pm$ 0.037
& 0.464 & 0.112 & 0.401 \\

5 & Burgers-48
& \textbf{$0.803\!\pm\!0.279$}
& 1.984 $\pm$ 0.016
& 2.185 $\pm$ 0.170
& 2.026 $\pm$ 0.051
& 2.218 $\pm$ 0.251
& 1.977 $\pm$ 0.030
& 0.406 & 0.138 & 0.445 \\

5 & AdvecDiff-32
& \textbf{$0.807\!\pm\!0.339$}
& 1.913 $\pm$ 0.020
& 1.863 $\pm$ 0.084
& 1.895 $\pm$ 0.044
& 2.173 $\pm$ 0.297
& 1.903 $\pm$ 0.041
& 0.433 & 0.668 & 0.399 \\

5 & AdvecDiff-48
& \textbf{$1.122\!\pm\!0.825$}
& 1.905 $\pm$ 0.014
& 1.868 $\pm$ 0.035
& 1.884 $\pm$ 0.027
& 2.296 $\pm$ 0.168
& 1.907 $\pm$ 0.025
& 0.601 & 0.833 & 0.447 \\

5 & ReactionDiff-32
& 1.150 $\pm$ 0.609
& \textbf{$0.279\!\pm\!0.009$}
& $\infty$
& 0.378 $\pm$ 0.064
& 0.353 $\pm$ 0.066
& 0.280 $\pm$ 0.027
& 4.123 & 0.668 & 0.401 \\

5 & ReactionDiff-48
& 2.244 $\pm$ 0.950
& \textbf{$0.289\!\pm\!0.008$}
& 1.546 $\pm$ 3.251
& 0.355 $\pm$ 0.035
& 0.374 $\pm$ 0.036
& 0.293 $\pm$ 0.016
& 7.775 & 0.139 & 0.448 \\

\midrule

10 & Lorenz-24
& 7.482 $\pm$ 1.815
& 7.511 $\pm$ 0.018
& $\infty$
& $\infty$
& 8.496 $\pm$ 1.854
& \textbf{$7.479\!\pm\!0.286$}
& 1.000 & 0.050 & 0.295 \\

10 & Lorenz-28
& 1.470 $\pm$ 1.091
& \textbf{$0.442\!\pm\!0.316$}
& 1.312 $\pm$ 1.068
& 1.570 $\pm$ 1.390
& $\infty$
& 1.223 $\pm$ 0.571
& 3.324 & 0.070 & 0.297 \\

10 & SpringChain-32
& \textbf{$1.159\!\pm\!0.064$}
& 1.193 $\pm$ 0.002
& 1.201 $\pm$ 0.014
& 1.190 $\pm$ 0.007
& 1.918 $\pm$ 0.206
& 1.193 $\pm$ 0.004
& 0.974 & 0.664 & 0.398 \\

10 & SpringChain-48
& 1.261 $\pm$ 0.244
& \textbf{$1.138\!\pm\!0.010$}
& 1.145 $\pm$ 0.019
& $\infty$
& 1.744 $\pm$ 0.251
& $\infty$
& 1.109 & 0.553 & 0.448 \\

10 & Burgers-32
& \textbf{$0.852\!\pm\!0.313$}
& 2.091 $\pm$ 0.013
& 1.992 $\pm$ 0.983
& 2.246 $\pm$ 0.070
& 2.836 $\pm$ 0.555
& 2.120 $\pm$ 0.029
& 0.428 & 0.081 & 0.347 \\

10 & Burgers-48
& \textbf{$0.631\!\pm\!0.174$}
& 1.974 $\pm$ 0.008
& $\infty$
& 2.067 $\pm$ 0.083
& 2.743 $\pm$ 0.541
& 1.981 $\pm$ 0.032
& 0.320 & 0.096 & 0.375 \\

10 & AdvecDiff-32
& \textbf{$0.637\!\pm\!0.226$}
& 1.907 $\pm$ 0.015
& 1.996 $\pm$ 0.228
& 1.907 $\pm$ 0.054
& 2.713 $\pm$ 0.449
& 1.922 $\pm$ 0.030
& 0.334 & 0.483 & 0.346 \\

10 & AdvecDiff-48
& \textbf{$0.891\!\pm\!0.558$}
& 1.907 $\pm$ 0.005
& 1.826 $\pm$ 0.224
& 1.862 $\pm$ 0.030
& 2.434 $\pm$ 0.391
& 1.904 $\pm$ 0.016
& 0.488 & 0.384 & 0.374 \\

10 & ReactionDiff-32
& 0.797 $\pm$ 0.535
& \textbf{$0.282\!\pm\!0.006$}
& 0.952 $\pm$ 1.387
& 0.505 $\pm$ 0.077
& 0.762 $\pm$ 0.471
& 0.288 $\pm$ 0.015
& 2.830 & 0.483 & 0.348 \\

10 & ReactionDiff-48
& 0.672 $\pm$ 0.566
& \textbf{$0.288\!\pm\!0.003$}
& $\infty$
& 0.436 $\pm$ 0.068
& 0.716 $\pm$ 0.315
& 0.290 $\pm$ 0.015
& 2.336 & 0.575 & 0.375 \\

\midrule

20 & Lorenz-24
& \textbf{$6.653\!\pm\!0.952$}
& 7.517 $\pm$ 0.011
& $\infty$
& $\infty$
& $\infty$
& 7.566 $\pm$ 0.335
& 0.885 & 0.049 & 0.294 \\

20 & Lorenz-28
& 1.240 $\pm$ 0.617
& \textbf{$1.234\!\pm\!0.001$}
& 2.217 $\pm$ 1.828
& 1.521 $\pm$ 1.094
& $\infty$
& $\infty$
& 1.005 & 0.066 & 0.294 \\

20 & SpringChain-32
& 1.212 $\pm$ 0.148
& 1.195 $\pm$ 0.002
& 1.202 $\pm$ 0.013
& \textbf{$1.193\!\pm\!0.008$}
& $\infty$
& 1.196 $\pm$ 0.005
& 1.016 & 0.481 & 0.346 \\

20 & SpringChain-48
& 1.179 $\pm$ 0.336
& \textbf{$1.136\!\pm\!4.81{\times}10^{-4}$}
& $\infty$
& $\infty$
& $\infty$
& $\infty$
& 1.039 & 0.382 & 0.375 \\

20 & Burgers-32
& \textbf{$0.638\!\pm\!0.179$}
& 2.095 $\pm$ 0.011
& 3.311 $\pm$ 3.652
& 2.248 $\pm$ 0.092
& 3.132 $\pm$ 0.541
& 2.099 $\pm$ 0.028
& 0.304 & 0.064 & 0.320 \\

20 & Burgers-48
& \textbf{$0.631\!\pm\!0.199$}
& 1.988 $\pm$ 0.006
& 2.201 $\pm$ 0.166
& 2.168 $\pm$ 0.184
& $\infty$
& 1.991 $\pm$ 0.021
& 0.317 & 0.072 & 0.334 \\

20 & AdvecDiff-32
& \textbf{$1.042\!\pm\!0.957$}
& 1.910 $\pm$ 0.012
& 1.901 $\pm$ 0.201
& 1.906 $\pm$ 0.057
& $\infty$
& 1.929 $\pm$ 0.039
& 0.548 & 0.383 & 0.319 \\

20 & AdvecDiff-48
& \textbf{$0.775\!\pm\!0.520$}
& 1.899 $\pm$ 0.013
& 1.823 $\pm$ 0.102
& 1.855 $\pm$ 0.042
& $\infty$
& 1.900 $\pm$ 0.032
& 0.425 & 0.288 & 0.333 \\

20 & ReactionDiff-32
& \textbf{$0.134\!\pm\!0.116$}
& 0.280 $\pm$ 0.006
& $\infty$
& $\infty$
& 1.532 $\pm$ 0.606
& 0.289 $\pm$ 0.028
& 0.480 & 0.383 & 0.320 \\

20 & ReactionDiff-48
& \textbf{$0.234\!\pm\!0.359$}
& 0.291 $\pm$ 0.005
& 1.119 $\pm$ 1.061
& $\infty$
& 2.341 $\pm$ 1.040
& 0.302 $\pm$ 0.016
& 0.805 & 0.433 & 0.334 \\

\midrule

40 & Lorenz-24
& \textbf{$6.170\!\pm\!0.953$}
& 7.516 $\pm$ 0.010
& $\infty$
& $\infty$
& $\infty$
& 7.559 $\pm$ 0.207
& 0.821 & 0.285 & 0.292 \\

40 & Lorenz-28
& 0.920 $\pm$ 0.545
& 1.235 $\pm$ 0.001
& $\infty$
& \textbf{$0.831\!\pm\!0.942$}
& 2.337 $\pm$ 2.884
& $\infty$
& 1.106 & 0.064 & 0.290 \\

40 & SpringChain-32
& \textbf{$1.070\!\pm\!0.379$}
& 1.193 $\pm$ 0.001
& $\infty$
& 1.194 $\pm$ 0.007
& $\infty$
& 1.195 $\pm$ 0.004
& 0.897 & 0.389 & 0.320 \\

40 & SpringChain-48
& \textbf{$1.087\!\pm\!0.017$}
& 1.135 $\pm 5.42{\times}10^{-4}$
& $\infty$
& $\infty$
& $\infty$
& $\infty$
& 0.958 & 0.575 & 0.334 \\

40 & Burgers-32
& \textbf{$0.537\!\pm\!0.155$}
& 2.096 $\pm$ 0.020
& $\infty$
& 2.457 $\pm$ 0.253
& $\infty$
& 2.120 $\pm$ 0.061
& 0.256 & 0.055 & 0.306 \\

40 & Burgers-48
& \textbf{$0.550\!\pm\!0.122$}
& 1.979 $\pm$ 0.015
& $\infty$
& 2.107 $\pm$ 0.153
& $\infty$
& 1.973 $\pm$ 0.030
& 0.279 & 0.059 & 0.311 \\

40 & AdvecDiff-32
& \textbf{$0.962\!\pm\!0.596$}
& 1.905 $\pm$ 0.016
& $\infty$
& 1.883 $\pm$ 0.064
& $\infty$
& 1.905 $\pm$ 0.031
& 0.511 & 0.221 & 0.303 \\

40 & AdvecDiff-48
& \textbf{$0.564\!\pm\!0.318$}
& 1.910 $\pm$ 0.016
& $\infty$
& 1.854 $\pm$ 0.033
& $\infty$
& 1.916 $\pm$ 0.030
& 0.304 & 0.238 & 0.311 \\

40 & ReactionDiff-32
& \textbf{$0.065\!\pm\!0.042$}
& 0.277 $\pm$ 0.006
& $\infty$
& $\infty$
& $\infty$
& 0.284 $\pm$ 0.022
& 0.235 & 0.221 & 0.306 \\

40 & ReactionDiff-48
& \textbf{$0.042\!\pm\!0.031$}
& 0.291 $\pm$ 0.005
& $\infty$
& $\infty$
& $\infty$
& 0.301 $\pm$ 0.020
& 0.143 & 0.238 & 0.313 \\

\bottomrule
\end{tabular}}
\end{table*}





\section{Additional Optical Case Studies}
\label{sec:additional_optical_cases}

Figure~\ref{fig:lln_error_case_bank_appendix} summarizes the same fixed model, full-test PCA basis, and deterministic confidence ordering for all seven LLN errors on the two-moons test set. It places the error locations, overlaid LLN position paths, layerwise turn-angle matrix, and full-test turn distribution in one large overview, so the appendix supports the main-text case without repeating seven small case panels.

\begin{table}[!t]
\centering
\small
\caption{\textbf{Complete index of LLN errors in the fixed two-moons
case study.}
Errors are written as true class \(\rightarrow\) predicted class.
Rank orders the errors by increasing predicted-class confidence, and
the peak layer maximizes \(\Delta\alpha_l\).}
\label{tab:lln_complete_error_index}
\setlength{\tabcolsep}{6pt}
\renewcommand{\arraystretch}{1.08}
\begin{tabular*}{\textwidth}{
@{\extracolsep{\fill}}
cccccc
@{}}
\toprule
Rank
& Test index
& Error
& Confidence
& Max. turn (\(^\circ\))
& Peak layer \\
\midrule
0 & 183 & \(1\!\rightarrow\!0\) & 0.547 & 16.62 & L1 \\
1 & 94  & \(0\!\rightarrow\!1\) & 0.560 & 18.52 & L1 \\
2 & 260 & \(1\!\rightarrow\!0\) & 0.592 & 17.38 & L1 \\
3 & 199 & \(1\!\rightarrow\!0\) & 0.624 & 13.93 & L1 \\
4 & 107 & \(0\!\rightarrow\!1\) & 0.661 & 16.11 & L7 \\
5 & 366 & \(0\!\rightarrow\!1\) & 0.726 & 17.31 & L7 \\
6 & 202 & \(1\!\rightarrow\!0\) & 0.788 & 16.96 & L1 \\
\bottomrule
\end{tabular*}
\end{table}

\begin{figure}
\centering
\includegraphics[width=0.9\textwidth]{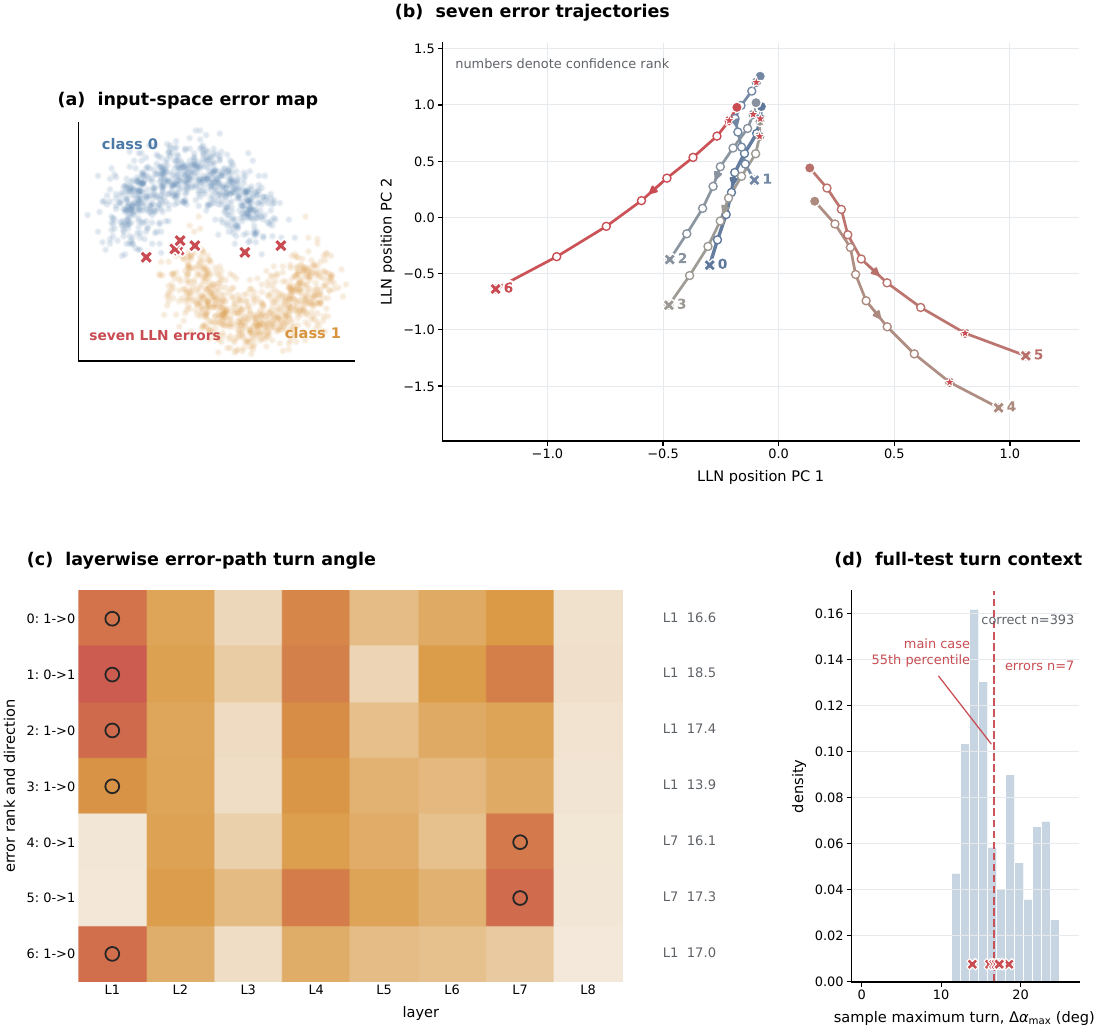}
\captionof{figure}{\textbf{Large overview of all seven LLN error paths and turn statistics.} (a) Input-space locations of the seven deterministic LLN errors. (b) LLN position trajectories in the full-test PCA basis; numbers denote confidence rank and stars mark each error path's peak-turn layer. (c) Layerwise wrong-sample turn angles for all error ranks. (d) Full-test maximum-turn distribution comparing LLN errors with correct samples.}
\label{fig:lln_error_case_bank_appendix}
\end{figure}

\end{document}